\documentclass{article} 
\usepackage{iclr2026_conference,times}

\usepackage{amsmath,amsfonts,bm}

\def\eqref#1{equation~\ref{#1}}

\def\1{\bm{1}}

\DeclareMathAlphabet{\mathsfit}{\encodingdefault}{\sfdefault}{m}{sl}
\SetMathAlphabet{\mathsfit}{bold}{\encodingdefault}{\sfdefault}{bx}{n}

\usepackage[utf8]{inputenc} 
\usepackage[T1]{fontenc}    
\usepackage{hyperref}       
\usepackage{url}            
\usepackage{booktabs} 
\usepackage{enumitem}
\usepackage{amsfonts}       
\usepackage{nicefrac}       
\usepackage{microtype}      
\usepackage{xcolor}         

\usepackage{microtype}
\usepackage{hyperref}
\usepackage{url}
\usepackage{amsmath, amssymb, amsthm}
\usepackage{booktabs}
\usepackage{multirow}
\usepackage{caption}
\usepackage{graphicx}
\usepackage{float}
\usepackage{wrapfig}
\usepackage{subcaption}
\usepackage{tcolorbox}
\tcbuselibrary{listingsutf8,breakable}

\providecommand{\email}[1]{\texttt{#1}}
\newtheorem{lemma}{Lemma}

\usepackage{lineno}

\definecolor{darkblue}{rgb}{0, 0, 0.5}
\hypersetup{colorlinks=true, citecolor=darkblue, linkcolor=darkblue, urlcolor=darkblue}

\usepackage[export]{adjustbox}

\title{Complementing Self-Consistency with \\ Cross-Model Disagreement for\\Uncertainty Quantification}

\author{
 Kimia Hamidieh$^1$\thanks{Correspondence to \email{hamidieh@mit.edu}. \\ \hspace*{1.1em} $^\dagger$Work done while at MIT. $^\ddagger$Work done while at the MIT-IBM Watson AI Lab.},
 Veronika Thost$^2$, 
 Walter Gerych$^{3\dagger}$,
 \textbf{Mikhail Yurochkin$^{4\ddagger}$, 
 Marzyeh Ghassemi$^1$} \\
  \\
  $^1$MIT $^2$ MIT-IBM Watson AI Lab $^3$WPI, $^4$ IFM MBZUAI
}

\iclrfinalcopy 
\begin{document}

\maketitle


\begin{abstract}
Large language models (LLMs) often produce confident yet incorrect responses, and uncertainty quantification is one potential solution to more robust usage. Recent works routinely rely on self-consistency to estimate aleatoric uncertainty (AU), yet this proxy collapses when models are overconfident and produce the same incorrect answer across samples. We analyze this regime and show that cross-model semantic disagreement is higher on incorrect answers precisely when AU is low. Motivated by this, we introduce an epistemic uncertainty (EU) term that operates in the black-box access setting: EU uses only generated text from a small, scale-matched ensemble and is computed as the gap between inter-model and intra-model sequence-semantic similarity. We then define total uncertainty (TU) as the sum of AU and EU. In a comprehensive study across five 7–9B instruction-tuned models and ten long-form tasks, TU improves ranking calibration and selective abstention relative to AU, and EU reliably flags confident failures where AU is low. We further characterize when EU is most useful via agreement and complementarity diagnostics.
\end{abstract}

\section{Introduction}
\label{sec:introduction}

Reliable uncertainty estimates are a prerequisite for deploying large language models (LLMs) in high-stakes domains \citep{chakraborti2025personalized}.
Many existing approaches for LLM uncertainty estimation are based on model's \emph{self}-confidence~\citep{xia2025survey,vashurin2025benchmarking,shorinwa2024survey}, such as by measuring response consistency under sampling~\citep{kuhn2023semantic, lin2023generating, nikitin2024kernel,aichberger2024semantically} or querying for a verbalized uncertainty score~\citep{lin2022teaching}. 
These metrics capture how internally confident a model is in its prediction -- a notion of \emph{predictive aleatoric uncertainty} (AU). But this leaves an important question unanswered: how confident should \emph{we} be in the model?  A model might be \emph{confident but wrong}, such as responding with the same incorrect answer with high probability (see Figure~\ref{fig:intro}). In these cases, methods that rely on self-consistency can fail~\citep{kossen2024semantic}. 
To address this, we focus on estimating \emph{epistemic uncertainty} (EU) -- uncertainty in our \emph{choice} of model -- which better reflects whether a model’s confidence is trustworthy for a given input. 

Estimating EU requires evaluating a distribution of plausible models, which is prohibitively costly for LLMs, as training even one additional model adds significant overhead~\citep{kirsch2024implicit,cottier2024rising}.
Recent shortcuts approximate EU in logit space \citep{ma2025estimating}, inject Bayesian noise during decoding \citep{liu2025enhancing,gao2024spuq}, or rely on verifier–model disagreement \citep{xue2025verify},
but each imposes strong task or architecture-specific assumptions. We instead capitalize on the ecosystem of open-weight LLMs: sampling responses from a small, scale-matched ensemble lets us estimate EU directly from cross-model semantic disagreement, without additional training. While prior work has shown that LLM ensembles can improve accuracy \citep{lu2024merge,dey2025uncertainty,chen2025we,shnitzer2023large, ielanskyiaddressing}, their use for uncertainty quantification has not been systematically explored.

By enabling scalable estimation of EU from model outputs alone, we can combine it with AU to obtain a more robust measure of uncertainty, Total Uncertainty (TU). These two forms of uncertainty are complementary; AU reflects variability in a model’s own predictions, while EU measures divergence from other plausible models~\citep{schweighofer2023introducing}. Together, they allow TU to account for both internal inconsistency and external disagreement. 

We evaluate EU, AU and TU on two standard axes: ranking-based calibration (via AUROC) and selective prediction (via abstention under uncertainty thresholds), across a range of models and generation tasks.
We conduct comprehensive experiments across five 7--9B parameter Instruction-tuned models~\citep{yang2024qwen2,jiang2023mistral,grattafiori2024llama,granite2024granite}, on \emph{ten} long-form generation tasks spanning QA, summarization, translation, and math reasoning~\citep{joshi2017triviaqa,yang2018hotpotqa,narayan2018don}. We also repeat these experiments for API models such as \texttt{GPT-4o}~\citep{hurst2024gpt} and \texttt{Claude 3.7 Sonnet}~\citep{Claude3S} on \texttt{SimpleQA}~\citep{wei2024measuring}. 
Our contributions are as follows:

\begin{figure}[t]
    \centering
    \vspace{-3em}
    \begin{subfigure}[t]{0.47\textwidth}
        \centering
        \includegraphics[width=\linewidth]{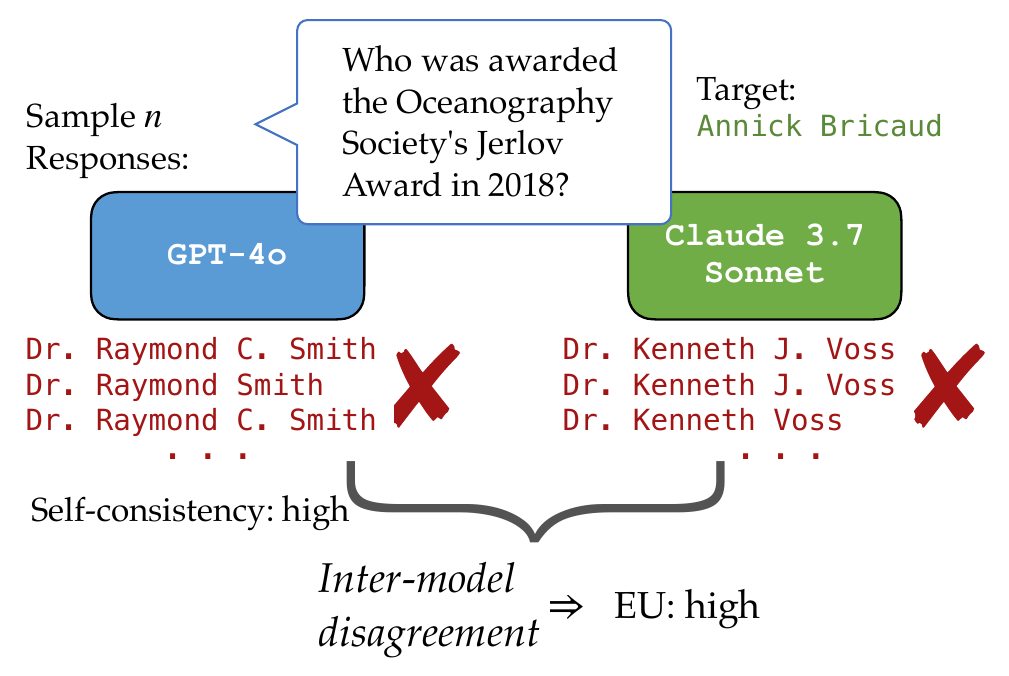}
    \end{subfigure}
    \hfill
    \begin{subfigure}[t]{0.5\textwidth}
        \centering
        \includegraphics[width=\linewidth]{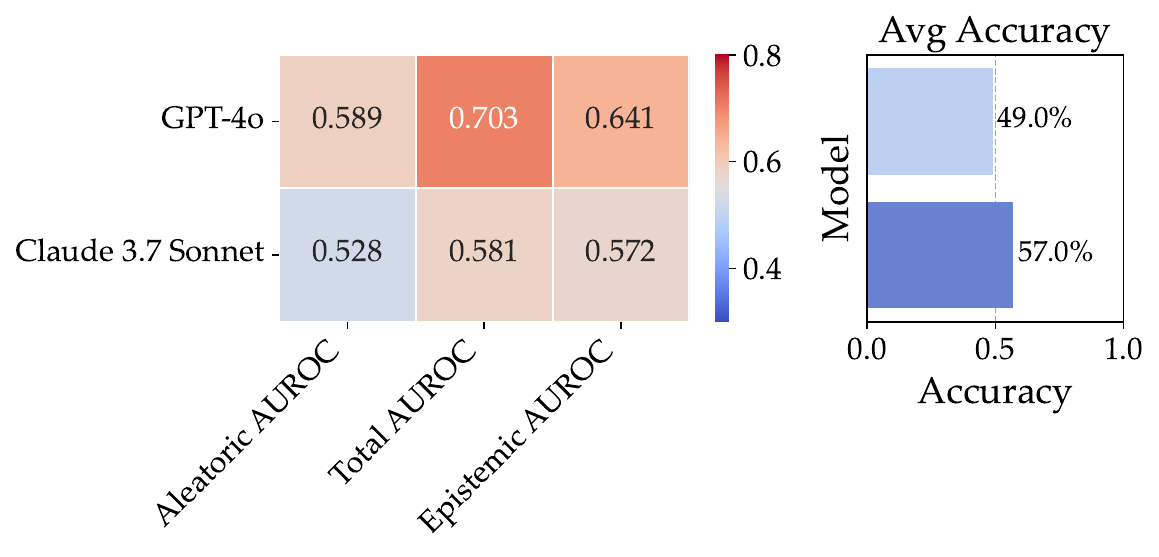}
    \end{subfigure}
    \caption{(a) Two models confidently produce distinct, incorrect answers to a factual question, which results in low intra-model variability (AU) but high semantic disagreement across models (EU).
    (b) Total uncertainty (TU = AU + EU) effectively improves uncertainty calibration with correcness in terms of AUROC on \texttt{SimpleQA}.}
    \label{fig:intro}
    \vspace{-1em}
\end{figure}

\begin{itemize}[leftmargin=*,topsep=1pt,noitemsep]
\item We diagnose the failure mode of self-consistency as an aleatoric proxy: it often collapses on \emph{confident errors}, or prompts where the model produces the same wrong answer repeatedly. 
\item We introduce an epistemic term based on cross-model semantic disagreement within a small, scale-matched ensemble, and show that it reliably identifies confident but incorrect responses.
\item We conduct an extensive empirical study across ten long-form generation tasks and five reference models, and show that TU consistently outperforms AU in both AUROC and selective abstention. 
\item We show that the proposed EU is most informative in tasks with a unique correct answer, such as factual QA and translation.
\end{itemize}

\section{Related Works}
\label{sec:related}

\textbf{Aleatoric Uncertainty in LLMs. }
Existing approaches mainly focus on AU, which captures response inconsistency or input ambiguity. Recent surveys provide extensive reviews of these methods~\citep{xia2025survey, shorinwa2024survey, vashurin2025benchmarking}. Typical strategies involve sampling multiple responses per prompt and analyzing their semantic consistency, often through clustering or entropy-based metrics~\citep{lin2023generating}. In line with prior evaluations, we adopt degree-based semantic dispersion~\citep{lin2023generating} as the AU baseline used throughout this paper. 

\textbf{Bayesian-inspired EU Estimation. }
A line of research employs Bayesian-inspired methods, such as adding noise to embeddings during generation to approximate uncertainty in model weights~\citep{liu2025enhancing}, sampling from a model with different temperatures~\citep{gao2024spuq}, or leverage entropy from decoding from different hidden states as proxy for uncertainty, which provides a computationally efficient alternative to exhaustive sampling~\citep{gao2025flue}. Training ensembles explicitly, such as LoRA-based methods \citep{wang2023lora}, demonstrate improved uncertainty calibration but incur significant computational costs. 
Another work~\citep{ma2025estimating} calculates AU and EU on token level by considering the LLM logits as parameters of a Dirichlet distribution and by applying other UQ methods subsequently. 
Unlike approaches that require logits or hidden states, whereas our estimator operates only on generated text (Eq.~\ref{eq:epistemic}), which enables application to black-box models.

\textbf{Prompt-Based and Verifier-Based EU Estimation. }
Prior works in iterative prompting~\citep{abbasi2024believe,johnson2024experts} estimate epistemic uncertainty by iteratively querying the same model, adding previous responses to the later queries' prompts, and measuring probabilistic inconsistencies.
However, these methods have limitations: gains over AU are mainly reported on multi-label data, with limited benefits on standard single-label QA~\citep{abbasi2024believe}, and some are evaluated only on synthetic data~\citep{johnson2024experts}. ~\citet{xue2025verify} utilizes one verifier LLM and shows that inter-model disagreement as a proxy for EU complements AU in cases where we reach the performance bounds of self-consistency. Their practical rule triggers cross-consistency only at intermediate AU, whereas our evaluation, and \citep{simhi2025trust}, indicates that low AU is especially prone to hallucination and is best complemented by EU (Sec.~\ref{sec:au-failures}). Prior works are mostly limited to special kinds of question-answering data, do not account for the impact of the vast LLM model space on EU, and do not fully explore interactions between AU and EU, gaps our work addresses explicitly.

\textbf{LLM Ensembles.} Our work builds upon classical uncertainty estimation, such as deep and dropout ensembles \citep{lakshminarayanan2017simple, gal2016dropout} and is closely related to 
LLM ensemble applications \citep{lu2024merge}. In particular, various recent works focus on LLM collaborations \citep{dey2025uncertainty}, verifier LLMs \citep{lifshitz2025multi}, and sampling from multiple LLMs \citep{chen2025we}. We study LLM ensembles from the viewpoint of uncertainty estimation. 

\section{Quantifying Predictive Uncertainty Using Response Similarity}
Let $\omega$ be the particular parameterization of an LLM, and let $x$ be a prompt. Our goal is to quantify the predictive uncertainty of $\omega$ given $x$ as input. As is standard, we categorize predictive uncertainty into two components: AU and EU \citep{hullermeier2021aleatoric}. The aleatoric component captures the inherent unpredictability of the response to $x$ under the model $\omega$, while the epistemic component captures our uncertainty in $\omega$ being the correct parametrization to use when responding to input $x$. We define total predictive uncertainty additively as the sum of the aleatoric and epistemic uncertainties. 

\subsection{Aleatoric Uncertainty via Intra-Model Response Similarity}
Many recent works have proposed techniques to measure the randomness in LLM responses \citep{kuhn_uncert, lin2023generating, liu2024uncertainty}. These techniques typically focus on measures of \emph{semantic} uncertainty, where uncertainty is defined as a function of how often an LLM produces semantically distinct outputs given the same input \citep{kuhn_uncert, lin2023generating}. In particular, \cite{lin2023generating} propose a measure equivalent to\footnote{This is equivalent to $U_{Deg}$ in \cite{lin2023generating}.} the following: 
\begin{align}\label{eq:aleatoric}
    U_{\text{aleatoric}}(x; \omega) = \mathbb{E}_{r^{\omega}_1 \sim p(\cdot | x, \omega)} \mathbb{E}_{r^{\omega}_2 \sim p(\cdot | x, \omega)} \big[ 1 - s(r^{\omega}_1, r^{\omega}_2) \big],
\end{align}
where 
$s(\cdot, \cdot)$ is a similarity metric for responses such as the cosine similarity in an embedding space. 

In essence, Equation \ref{eq:aleatoric} corresponds to the expected similarity between two responses independently sampled from $p(  \cdot | x, \omega)$, the response distribution of $\omega$ conditioned on $x$. If responses typically have the same semantic meaning as each other, meaning that the meaning of the response does not vary when resampled, then $U_\text{aleatoric}(x; \omega)$ will be close to 0, which means that there is little uncertainty in how $\omega$ will respond to $x$. If the model is likely to produce semantically distinct responses for the same input, then $U_\text{aleatoric}(x; \omega)$ will be high, which means $\omega$ has high uncertainty for $x$. 

Equation \ref{eq:aleatoric} captures the inherent uncertainty in the response to $x$ given model $\omega$. However, $\omega$ may not be the optimal model to use for $x$, and Equation \ref{eq:aleatoric} fails to capture the inherent uncertainty that comes from choosing $\omega$ as our parameterization. There is thus a need to also capture the \emph{epistemic} uncertainty that comes from our model choice. 

\subsection{Epistemic Uncertainty as Inter-Model Response Similarity}
\label{sec:eu-method}
Let $\omega^*$ represent a hypothetical ``ideal'' model, such that $p(\cdot | x; \omega^*) = p(\cdot | x)$; the distribution of responses from $\omega^*$ equals the true response distribution.  We can thus quantify the epistemic uncertainty of $\omega$ as a divergence between $\omega$ and $\omega^*$; e.g., $U_\text{epistemic}(x, \omega) = D(\omega  \ || \  \omega^*)$ \citep{schweighofer2023introducing}. We define $D$ as follows:
\begin{equation} \label{eq:D}
    D(\omega \ || \ \omega^*) = - \Big[     
    \underbrace{\mathbb{E}_{q^{\omega}_1 \sim p(\cdot | x, \omega)} \mathbb{E}_{q^{\omega^*}_2 \sim p(\cdot | x, \omega^*)} \big[s(q^{\omega}_1, q^{\omega^*}_2) \big]}_{\text{cross-model similarity}} 
    - 
    \underbrace{\mathbb{E}_{r^{\omega}_1 \sim p(\cdot | x, \omega)} \mathbb{E}_{r^{\omega}_2 \sim p(\cdot | x, \omega)} \big[s(r^{\omega}_1, r^{\omega}_2) \big]}_{\text{self-similarity (1 - AU)}} 
    \Big].
\end{equation}

In effect, $D(\omega  \ || \  \omega^*)$ measures the difference between 1) the similarity of responses from $\omega$ and $\omega^*$ 
\big($\mathbb{E}_{q^{\omega}_1 \sim p(\cdot | x, \omega)} \mathbb{E}_{q^{\omega^*}_2 \sim p(\cdot | x, \omega^*)} \big[s(q^{\omega}_1, q^{\omega^*}_2) \big]$\big) and 2) the self-similarity of $\omega$  \big($\mathbb{E}_{r^{\omega}_1 \sim p(\cdot | x, \omega)} \mathbb{E}_{r^{\omega}_2 \sim p(\cdot | x, \omega)} \big[s(r^{\omega}_1, r^{\omega}_2) \big]$\big). In the case where $\omega$ is optimal and equivalent to $\omega^*$, then $D(\omega  \ || \  \omega^*)$ will be 0. When $\omega$ produces responses that are semantically diverse from the ideal model's responses even after accounting for the diversity due to $\omega$'s aleatoric uncertainty, then $D(\omega  \ || \  \omega^*)$ will be high. 
In practice, we do not have access to the optimal model $\omega^*$. Instead, we can leverage a recent information-theoretic technique \citep{schweighofer2023introducing} and marginalize out $\omega^*$. Let $P_\Omega$ 
be a distribution over models such that $\mathbb{E}_{\tilde{\omega} \sim P_\Omega} \big[ p(\cdot \ | \ x; \tilde{\omega}) \big] = p(\cdot \ | \ x) $.  We can thus replace $\omega^*$ in Equation \ref{eq:D} with an expectation over $P_\Omega$, and define $U_\text{epistemic}(x, \omega)$ as:
\begin{equation} \label{eq:epistemic}
\begin{aligned}
    U_{\text{epistemic}}(x, \omega) = 
    &- \mathbb{E}_{\tilde{\omega} \sim P_\Omega} \Big[     
    \mathbb{E}_{q^{\omega}_1 \sim p(\cdot | x, \omega)} \mathbb{E}_{q^{\tilde{\omega}}_2 \sim p(\cdot | x, \tilde{\omega})} \big[s(q^{\omega}_1, q^{\tilde{\omega}}_2) \big] \Big] \\
    &+ \mathbb{E}_{r^{\omega}_1 \sim p(\cdot | x, \omega)} \mathbb{E}_{r^{\omega}_2 \sim p(\cdot | x, \omega)} \big[s(r^{\omega}_1, r^{\omega}_2) \big].
\end{aligned}
\end{equation}

When the average similarity in responses between $\omega$ and other sampled models matches the self-similarity of $\omega$'s responses, then the semantic distribution of $\omega$ matches the target distribution and the epistemic uncertainty is low. When there is a mismatch between the average similarity of $\omega$'s responses to the responses from the sampled models $\tilde{\omega}$ 
compared to $\omega$'s self-similarity, then there is a disagreement in how models respond and the epistemic uncertainty is high. In Appendix~\ref{app:eu-theoretical}, we provide a detailed interpretation of $D(\omega \,\|\, \omega^*)$ as a one-sided kernel discrepancy, establish its connections to variational inference, and show that it is upper bounded by total variation distance under mild conditions.

\paragraph{Desired Properties for $\Omega$.}
Because the divergence $D(\omega\ || \ \omega^*)$ is evaluated against samples drawn from a \emph{surrogate} distribution of models $\Omega$, its fidelity hinges on how well that ensemble of models approximates the (inaccessible) optimal distribution $p(\cdot \ | \ x; \omega^*)$.  

Three criteria follow from the definition in Eq.~\ref{eq:epistemic}:

\begin{enumerate}[label=(\roman*),leftmargin=*,noitemsep,topsep=0pt, parsep=0pt, itemsep=0pt, partopsep=0pt]
    \item \textbf{Support richness.}  $\Omega$ covers distinct yet plausible interpretations of models, rather than a narrow subset; otherwise the cross–similarity term in $D$ may be artificially high, and $D$ underestimates EU when predictions of models in $\Omega$ are different from $\omega$'s predictions.
    \item \textbf{Non‑collapsing diversity.}  If all members of $\Omega$ are nearly identical (e.g. noise-perturbed versions of the same model), the ensemble average would be too close to $\omega$, hence the cross-model similarity term will be close to self-similarity

and $D$ may be small, even when the candidate predictor $P_{\omega}$ is \emph{mis‑specified}. 
    \item \textbf{Calibrated weighting.}  Let $P_{\Omega}$ denote the mixing measure over models.  For Eq.~\ref{eq:epistemic} to approach the ideal $p(y\mid x)$, each model should be weighted in proportion to its posterior credibility (e.g.\ uniform weights are appropriate only when validation risks are comparable).  
\end{enumerate}

\paragraph{Achieving Properties via Cross-Family Models.}  
A practical way to satisfy these criteria is to construct the surrogate ensemble $\Omega$ from models of similar architecture and scale, likely trained on overlapping or similar pre-training datasets. Specifically, we populate $\Omega$ with 7–9B Transformer-based models that share the \emph{same architecture class} but are trained by \emph{different vendors}.
This setup ensures (i) \emph{support richness}, as models differ in data pipelines, initializations, and alignment protocols, which results in diverse but plausible responses for the same input, that cover the ground-truth response set. These independently trained models also exhibit (ii) \emph{non-collapsing diversity}, as their differences arise from different design choices, rather than noise-perturbed versions of a single model. 
Finally, because these models achieve similar validation performance, we adopt uniform weights in $P_{\Omega}$, which satisfies the \emph{calibrated weighting} (iii) requirement. Section~\ref{sec:models} specifies the exact models used.

\paragraph{Total Predictive Uncertainty. }
We make the standard assumption that total predictive uncertainty can be obtained by adding aleatoric and epistemic predictive uncertainties \citep{hullermeier2021aleatoric}. Thus, we define $U_\text{total}(x; \omega)$ as:
\begin{equation} \label{eq:total}
\begin{aligned}
    U_\text{total}(x; \omega) &= U_\text{aleatoric}(x; \omega) + U_\text{epistemic}(x; \omega) \\
    &= \mathbb{E}_{\tilde{\omega} \sim P_\Omega} \mathbb{E}_{r^{\omega}_1 \sim p(\cdot | x, \omega)} \mathbb{E}_{q^{\omega}_2 \sim p(\cdot | x, \tilde{\omega})} \big[ 1 - s(r^{\omega}_1, q^{\tilde{\omega}}_2) \big].
\end{aligned}
\end{equation}

\subsection{Empirical Estimates of Uncertainty Metrics}
\label{sec:emp-method}
For a given input prompt $x$, we call the model whose uncertainty is being estimated the \textit{reference model} $\omega$, and denote the set of models used to compute epistemic uncertainty with respect to the reference as the \textit{auxiliary model set} $\Omega$.
Throughout the paper, we mainly focus on \emph{Cross-family auxiliary models}:  
We estimate epistemic uncertainty by computing response divergence across an auxiliary set of models. To estimate uncertainty in practice, we proceed as follows:
\begin{enumerate}
    \item Sample $n$ responses from each model $\omega_i \in \Omega$, and denote the set of responses from $\omega$ as $R' = \{r'_1, r'_2, \dots, r'_n\}$ and from $\omega_i$ as $R_i = \{r_1^{(i)}, r_2^{(i)}, \dots, r_n^{(i)}\}$ where $|\Omega| = m$. 
    \item Approximate Aleatoric, Total, and Epistemic Uncertainty using these sampled responses.
\end{enumerate}

\begin{tcolorbox}[colback=gray!5!white,colframe=black!75!black,title=Empirical Uncertainty Metrics]%
\vspace{-10pt}
\begin{align*}
    AU &= U_\text{aleatoric} = 1 - \big[\sum_{k=1}^{n} \sum_{j=1}^{n}  s(r'_k, r'_j)\big] / n^2 \\
    TU &= U_\text{total} = 1 - \frac{1}{m} \sum_{i=1}^{m} \big[\sum_{k=1}^{n} \sum_{j=1}^{n}  s(r'_k, r^{(i)}_j)\big] / n^2 \\ 
    EU &= U_\text{epistemic} =  U_\text{total} - U_\text{aleatoric}
\end{align*}
\end{tcolorbox}

Note that we assign uniform weights to different models in the auxiliary set, as we choose to use models of similar capabilities, and our estimate of AU is similar to the one from~\cite{lin2023generating}.
Also, observe that our evaluation shows that we can keep the overall number of sampled responses at the  magnitude used by self-consistency based methods while improving over those. More concretely, we choose \smash{$n=\frac{n'}{m}$}, when comparing to AU as a standalone metric, where $n'$ is the number of samples used for the latter.

\section{Experimental Setup}

\label{sec:models}
\textbf{Models.}
In the main experiments, we primarily focus on five instruction-tuned language models with approximately 7--9B parameters: \texttt{Gemma-2-9B-It}~\citep{team2024gemma}, \texttt{Granite-3.0-8B-Instruct}~\citep{granite2024granite}, \texttt{Llama-3.1-8B-Instruct} \citep{grattafiori2024llama}, \texttt{Mistral-7B-Instruct-v0.3}~\citep{jiang2023mistral}, 

\texttt{Qwen2.5-7B-Instruct} \citep{yang2024qwen2}. 
We compute the uncertainty measures from Section~\ref{sec:emp-method} and consider the models mentioned above as the set of auxiliary models. 
In Appendix~\ref{app:eval-auxset}, we also consider larger reference models.  
Unless otherwise noted, we compute TU by sampling 2 responses from each of the 5 models and similarly compute AU using 10 samples to keep the sampling budget the same across the two metrics.

\textbf{Datasets.}
\label{sec:datasets}
Our experiments cover a broad range of long-form generation tasks spanning question answering (QA), math reasoning, translation, and summarization. For QA, we include \texttt{AmbigQA}~\citep{min2020ambigqa} (open-domain QA with both ambiguous and unambiguous questions), \texttt{NQ-open}~\citep{kwiatkowski2019natural} (closed-book QA derived from real user queries), \texttt{HotpotQA}~\citep{yang2018hotpotqa} (multi-hop QA requiring reasoning over multiple supporting documents), \texttt{CoQA}~\citep{reddy2019coqa} (conversational QA with multiple turns), \texttt{QASPER}~\citep{dasigi2021dataset} (fact-based QA over long scientific papers), \texttt{TriviaQA}~\citep{joshi2017triviaqa} (QA based on trivia-style questions), and \texttt{TruthfulQA}~\citep{lin2021truthfulqa} (QA to evaluate common misconceptions in models). For math reasoning, 
we use \texttt{GSM8K}~\citep{cobbe2021training} with chain-of-thought prompting. For language generation, we evaluate on the German-to-English translation dataset \texttt{WMT16-de-en}~\citep{bojar2016findings} and the summarization benchmark \texttt{XSum}~\citep{narayan2018don}. We additionally include \texttt{SimpleQA}~\citep{wei2024measuring}, a factuality QA benchmark, with model responses generated by \texttt{GPT-4o}~\citep{hurst2024gpt} and \texttt{Claude 3.7 Sonnet}~\citep{Claude3S}. Finally, we adapt tasks from the BBH multiple-choice benchmark~\citep{suzgun2022challenging} to long-form format and add those evaluations to Appendix~\ref{app:bbh}.

\textbf{Evaluation.} 
\label{sec:eval}
Correctness is defined per input-response pair using \texttt{Meta-Llama-3-70B-Instruct} as judge (Appendix~\ref{app:llm-judge}). Note that, in the context of uncertainty estimation, LM-as-a-judge correctness evaluation has recently been shown to be the most reliable among the existing methods~\citep{santilli2025revisiting}.

Following prior work~\citep{lin2023generating, kuhn2023semantic, band2022benchmarking}, we evaluate the quality of uncertainty by quantifying how well uncertainty scores separate correct from incorrect generations, using Area Under the ROC Curve (AUROC). Formally, AUROC corresponds to the probability that a randomly chosen incorrect response receives a higher uncertainty score than a randomly chosen correct one.

We also evaluate effectiveness in terms of selective prediction using Risk–Coverage Curves~\citep{nadeem2009accuracy}, which measures how the error rate changes as uncertain responses are rejected. 
We further report standard summary metrics such as accuracy at 90\% and 80\% coverage (C@90 and C@80), and the Area Under Risk-Coverage Curve (AURC), where lower is better.

\textbf{Baselines. } As our primary aleatoric baseline, we use \citet{lin2023generating}'s implementation of Aleatoric Uncertainty, which is in practice similar to Semantic Entropy~\citep{kuhn2023semantic}, and has been shown to perform well in recent benchmarks and surveys~\citep{fadeeva2023lm, vashurin2025benchmarking}. In our evaluation, we denote this baseline as Aleatoric or AU. 
We also experiment with noise-perturbed models (i.e., instead of models from different model families), similar to the approach of \citet{liu2025enhancing}, see details in Appendix~\ref{app:eval-auxset}.

\begin{figure}[t]
    \centering
    \vspace{-3em}
    \begin{subfigure}[t]{0.5\textwidth}
        \centering
        \includegraphics[width=\linewidth]{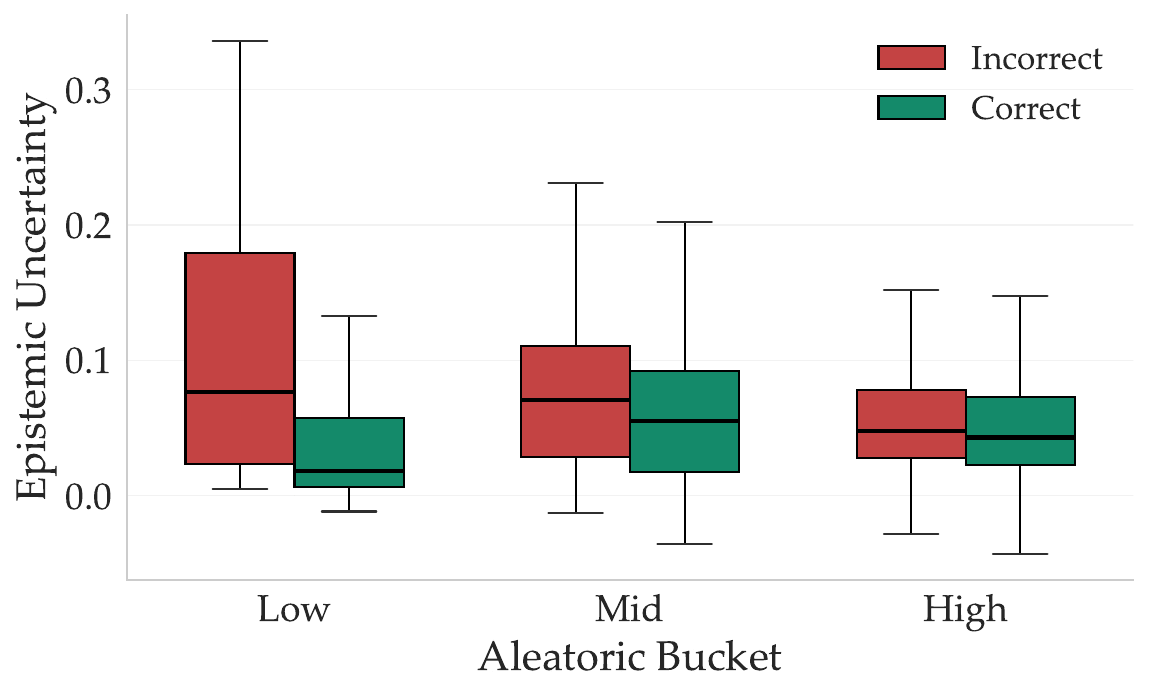}
        \caption{We stratify examples by AU (low, mid, high; 33\% each) and compare the distribution of EU for correct and incorrect generations. Incorrect responses show higher EU in the low-AU regime, but this separation weakens as AU increases.}
        \label{fig:epistemic-by-au-bucket}
    \end{subfigure}%
    \hfill
    \begin{subfigure}[t]{0.45\textwidth}
        \centering
        \includegraphics[width=\linewidth]{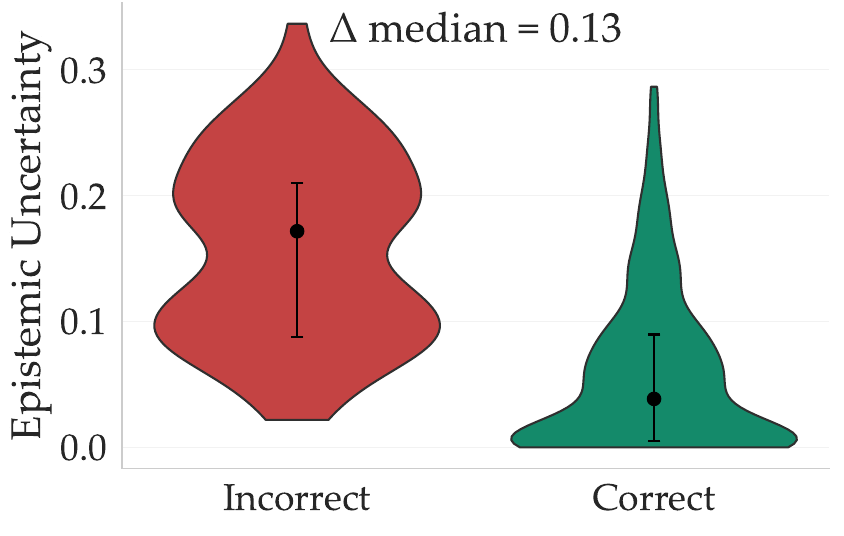}
        \caption{We isolate the most confident samples (lowest 5\% AU) and find that incorrect generations have significantly higher EU than correct ones, which shows that EU effectively flags confidently wrong predictions.}
        \label{fig:low-au-eu}
    \end{subfigure}
    \caption{Based on the distribution of EU across samples with different AU values, we find that EU separates incorrect from correct most strongly when AU is low.}
    \vspace{-0.5em}
\end{figure}

\section{Results}

\subsection{Epistemic Uncertainty Flags Confident Failures of Aleatoric Uncertainty}
\label{sec:au-failures}

Language models are often applied to heterogeneous tasks, where model confidence does not always align with correctness~\citep{zhou2024larger}. 
To simulate such a setting, we construct an aggregated dataset by combining all datasets mentioned in Section~\ref{sec:datasets}, and analyze uncertainty trends on this pooled distribution. We are particularly interested in identifying failure modes of AU.

In Figure~\ref{fig:epistemic-by-au-bucket}, we stratify examples by AU (low, mid, high) and compare EU across correct and incorrect responses. In the low-AU regime, incorrect responses exhibit higher EU than correct ones, which shows that EU is discriminative when aleatoric scores are overconfident. This separation diminishes in higher AU buckets, where both response groups become more uncertain.
To more directly target this failure mode, we isolate the lowest 5\% of AU scores and analyze EU by correctness.
(Figure~\ref{fig:low-au-eu}). EU remains significantly higher for incorrect generations, which confirms that epistemic uncertainty flags confidently wrong outputs that aleatoric scores alone miss, which supports our hypothesis of the complementary nature of scores in this particular AU region.

This result contrasts with prior work, which treats low-AU predictions as reliable and only incorporates cross-model comparisons when AU exceeds a threshold~\citep{xue2025verify, chen2025we}. Our findings reveal that this assumption overlooks a critical failure mode: confidently wrong predictions with low AU. 
On the other hand, our observations validate findings about models being overconfident on \texttt{HotpotQA}~\citep{ni2024llms} in that incorporating EU yields large improvements on this dataset (see Figure~\ref{fig:auroc-diff} in Section~\ref{sec:eval-auroc}). Per-dataset results are provided in Figure~\ref{fig:epistemic-by-au-bucket-per-dataset} in Appendix~\ref{app:au-failures}.

\subsection{Epistemic Uncertainty, Agreement, and Diversity}
\label{sec:agreement}

We ask \emph{when} similarity-based EU is most informative.
To this end, we focus on the correctness of responses of different models and consider two metrics: \emph{Jaccard Agreement} (or \emph{Redundancy}) ($J$), which measures the overlap between predicted correct responses of the auxiliary models, used to quantify how redundant or similar different predictions are; 
and \emph{Oracle Coverage Gain} (also \emph{Complementarity}) ($G$), the additional coverage (i.e., improvement in accuracy) obtained by an oracle that always chooses the correct model per example, over the best performing model. Exact definitions can be found in 
Appendix~\ref{app:G-J}. 

\begin{figure}[b]
    \centering
    \includegraphics[width=\textwidth]{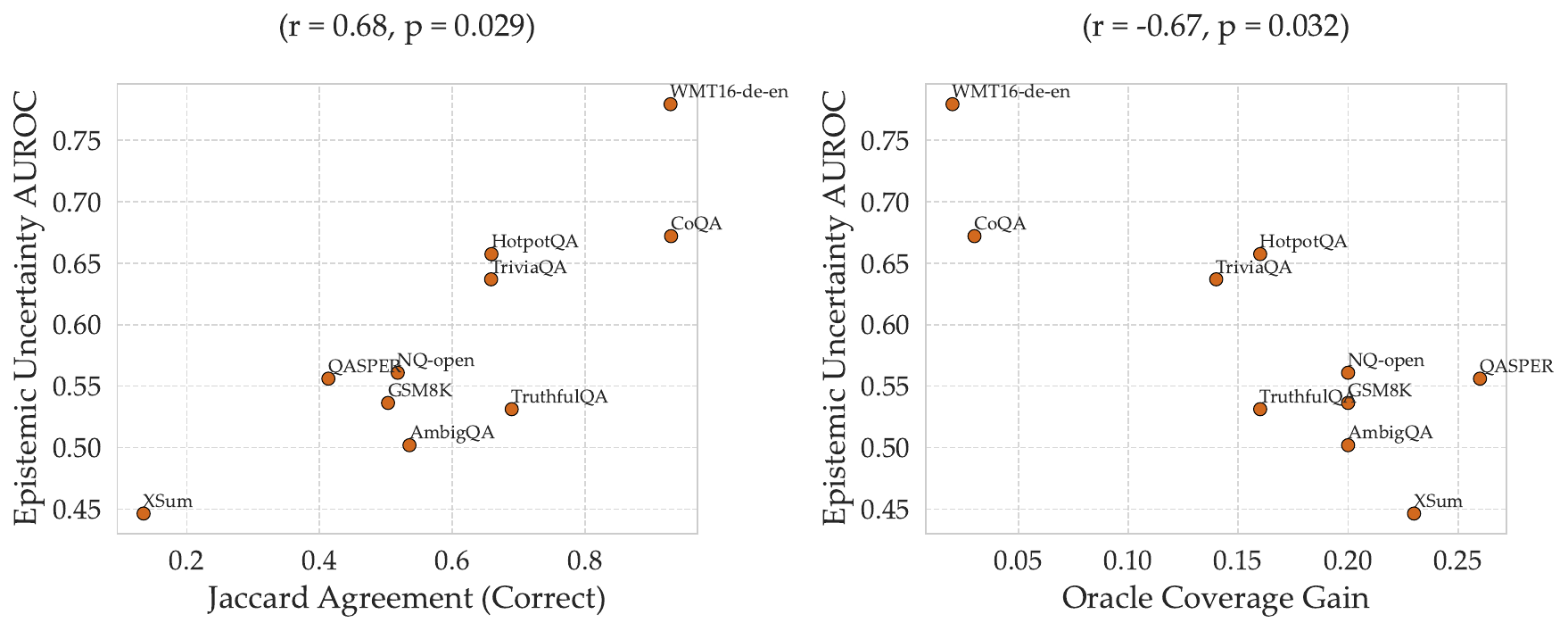}
    \caption{Epistemic uncertainty AUROC versus dataset level redundancy ($J$) and complementarity ($G$). Higher AUROC indicates better discrimination between correct and incorrect answers by EU.}
\label{fig:eu-agreement}
\end{figure}

\textbf{Epistemic uncertainty does not always coincide with inter-model disagreement. } Figure~\ref{fig:eu-agreement} plots EU AUROC against the dataset-level statistics $J$ and $G$. We observe a positive correlation with redundancy ($r=+0.72,\;p=0.03$) and a negative correlation with complementarity ($r=-0.72,\;p=0.03$), which is the opposite of the naive intuition that ``more disagreement $\Rightarrow$ higher epistemic utility.''

The explanation lies in how EU is constructed: it grows with the \emph{divergence of generated answers}, which arises in two distinct cases: (i) true EU on intrinsically hard questions where models do not know the answer, and (ii) the existence of many semantically different but correct responses (response noise). In complementary datasets (large $G$), each model specializes on different niches and consequently, EU is large even on questions that an individual model answers correctly, because models in the auxiliary set return alternative (wrong) responses. In such cases, the misalignment with correctness drives AUROC down. Conversely, in redundant datasets (high $J$, low $G$), models converge to similar responses when correct (EU low) and, what we expect to be the usual case, still diverge when collectively wrong (EU high), which gives a well-separated score and high AUROC. These observations characterize the cases in which our current EU estimator is effective: tasks with a single (or near-unique) correct answer, where models phrase that answer similarly yet generate diverse alternatives on the harder, unanswered inputs.

For example, \texttt{WMT16-de-en} and \texttt{CoQA} occupy the high-$J$, low-$G$ corner of Figure \ref{fig:eu-agreement}; all models score above $>90\%$ accuracy, so predictions are largely redundant and EU achieves its strongest discrimination. At the opposite extreme, \texttt{XSum} combines low accuracy with the largest $G$: models succeed on different inputs and can express many valid summaries, which inflates EU without improving ranking and thus lowering AUROC. Datasets such as \texttt{HotpotQA} and \texttt{TriviaQA} sit mid-range on both axes and have enough redundancy to suppress noise, but have sufficient diversity to expose disagreement and consequently produce the large TU gains in Figure \ref{fig:auroc-diff}.

\begin{figure}[t]
    \centering
    \includegraphics[width=\textwidth,trim={0 0 0 0},clip]{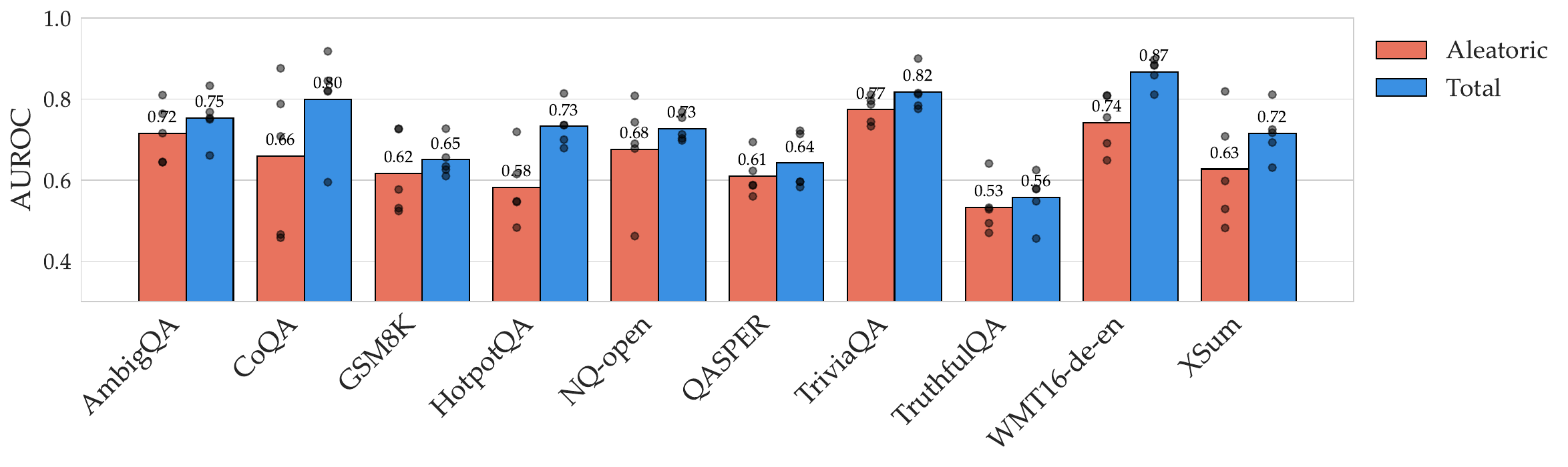}
    \vspace{-1em}
    \caption{Under matched sample budgets, TU (AU+EU) consistently shows higher AUROC than AU across datasets, with the largest gains on \texttt{HotpotQA}, \texttt{CoQA}, and \texttt{WMT16-de-en} ($\Delta>$0.10). Bars show means over five 7–9B reference models with per-dataset dots.}
    \label{fig:auroc-diff}
\end{figure}

\begin{figure}[b]
    \centering
    \includegraphics[width=\textwidth,trim={0 0 0 0},clip]{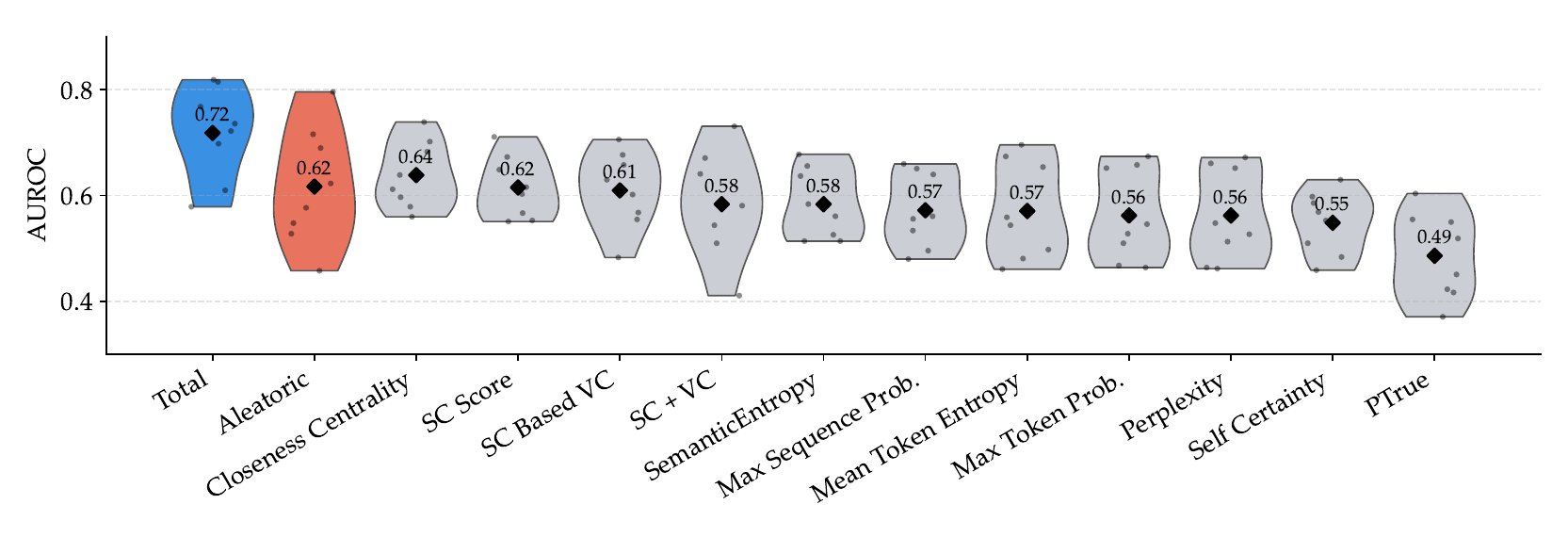}
    \caption{
    Using \texttt{Mistral-7B-Instruct-v0.3} as the reference model, TU attains the best mean AUROC (0.72), and outperforms the strongest baseline (closeness centrality, 0.64) across almost all datasets. Per-task results appear in Table~\ref{tab:auroc-baselines} in Appendix~\ref{app:additional_auroc}.}
    \label{fig:auroc-baselines}
\end{figure}

\subsection{Total Uncertainty Improves Correctness Calibration}
\label{sec:eval-auroc}

Figure~\ref{fig:auroc-diff} reports the AUROC between negative uncertainty and correctness across datasets, and averaged over five 7--9B instruction-tuned models mentioned in Section~\ref{sec:models}. TU consistently improves over AU on all benchmarks on average.
The largest gains occur on \texttt{HotpotQA} (+0.15), \texttt{CoQA} (+0.14), and \texttt{WMT16-de-en} (+0.13), where models either disagree on complex multi-hop reasoning (\texttt{HotpotQA}) or achieve high overall accuracy (\texttt{CoQA}, \texttt{WMT16-de-en}), which allows EU to capture remaining errors. 

Moderate improvements are observed on \texttt{TriviaQA}, and \texttt{NQ-open}, which exhibit a balance of response \emph{redundancy} and \emph{complementarity}. In contrast, gains are more limited on \texttt{TruthfulQA}, \texttt{GSM8K} (with chain-of-thought), and \texttt{QASPER}, where the presence of multiple valid or stylistically diverse answers weakens the alignment between TU and correctness. These results align with the patterns described in Section~\ref{sec:agreement}: TU and EU are most effective when correct answers are uniquely phrased and shared across models, while incorrect predictions remain diverse.

We also find that TU estimates consistently improve AUROC as compared to AU in \texttt{GPT-4o} (0.70 vs. 0.59), and \texttt{Claude 3.7 Sonnet} (0.58 vs. 0.53) on \texttt{SimpleQA} as shown in Figure~\ref{fig:intro}. Figure~\ref{fig:roc-total-all} in Appendix~\ref{app:additional_auroc} shows the ROC curves on the combination of all datasets, where the relative ranking of data points across the whole dataset determines performance, and Table~\ref{tab:auroc} reports AUROC per model-dataset pair. We show that the improvement over AU is maintained in individual datasets, and in the combination of all datasets.

\textbf{Comparison to Baselines. } We compare TU against a number of baselines: 
Mean Token Entropy~\citep{fomicheva2020unsupervised}, Maximum Token Probability~\citep{fadeeva2023lm}, Maximum Sequence Probability~\citep{fadeeva2023lm}, Perplexity~\citep{fomicheva2020unsupervised}, PTrue~\citep{kadavath2022language}, Self-Certainty~\citep{kang2025scalable}, Semantic Entropy~\citep{kuhn2023semantic}, SC (Self-Consistency) Score~\citep{wang2022self,manakul2023selfcheckgpt}, Closeness Centrality, SC + VC (Verbalized confidence), and SC based CV~\citep{jiang2024graph}.
Figure~\ref{fig:auroc-baselines} shows results for \texttt{Mistral-7B-Instruct-v0.3} as the reference model across benchmarks and baselines. Table~\ref{tab:auroc-baselines} in Appendix~\ref{app:additional_auroc} shows that TU outperforms the strongest baseline across almost all benchmarks.

\textbf{Ablations. } We further ablate the size of the reference model in Appendix~\ref{app:eval-auxset}, and show that even in scenarios where the reference model is larger (and has higher accuracy) than models in the auxiliary set, TU still achieves higher AUROC than AU. Furthermore, we show that AUROC improves with a larger number of sampled responses in Appendix~\ref{app:auroc-abl}. 

\begin{figure}[t]
    \centering
    \begin{subfigure}{0.48\textwidth}
    \centering
    \vspace{-3em}\includegraphics[width=\textwidth]{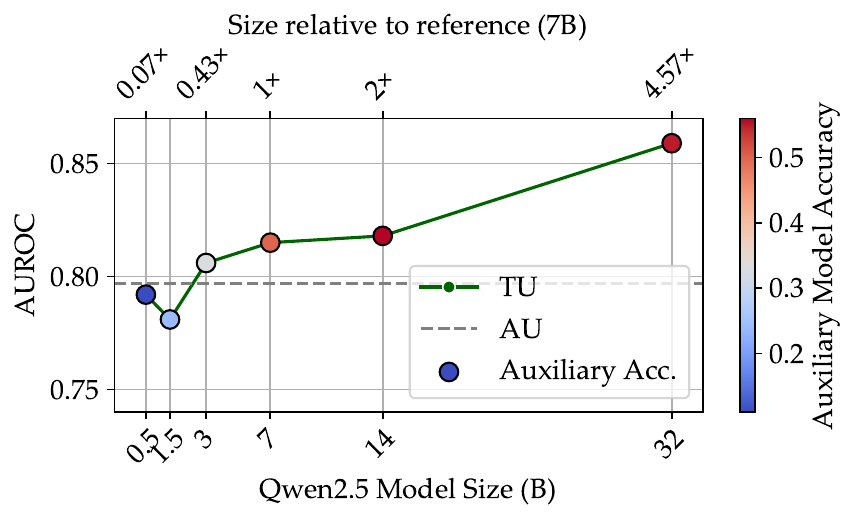}
  \end{subfigure}
    \hfill
    \centering
    \begin{subfigure}{0.48\textwidth}
    \includegraphics[width=\textwidth]{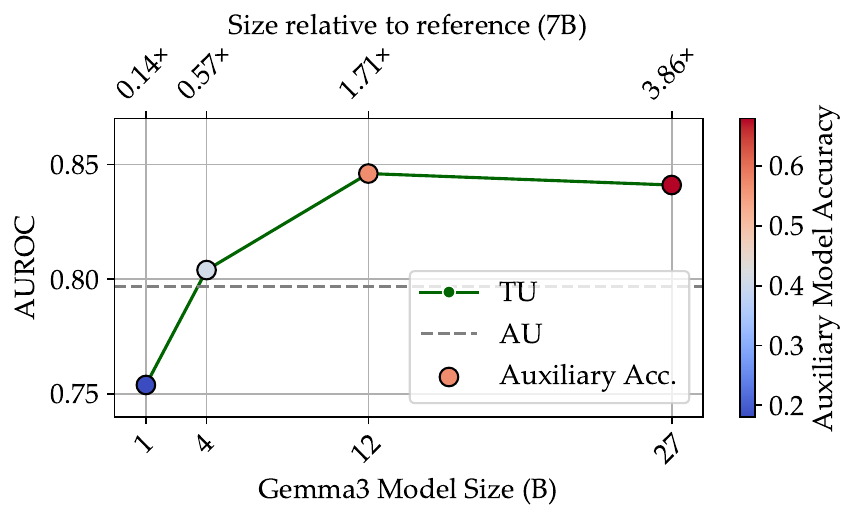}
\end{subfigure}
\caption{We keep the reference model fixed as \texttt{mistral-7B}, and vary the size of the \textbf{single auxiliary model}. TU achieves higher AUROC in comparison to AU, even in cases where the size of the auxiliary model is lower than ($\times 0.43$) or roughly the same ($\times 1$) as the reference model. The improvements are more significant with larger and more capable the auxiliary models on \texttt{TriviaQA}.} 
\label{fig:aux-model-size-abl}
\vspace{-1.em}
\end{figure}

\begin{figure}[b]
    \centering
    \begin{subfigure}[h]{0.49\textwidth}
        \centering
        \includegraphics[width=\linewidth]{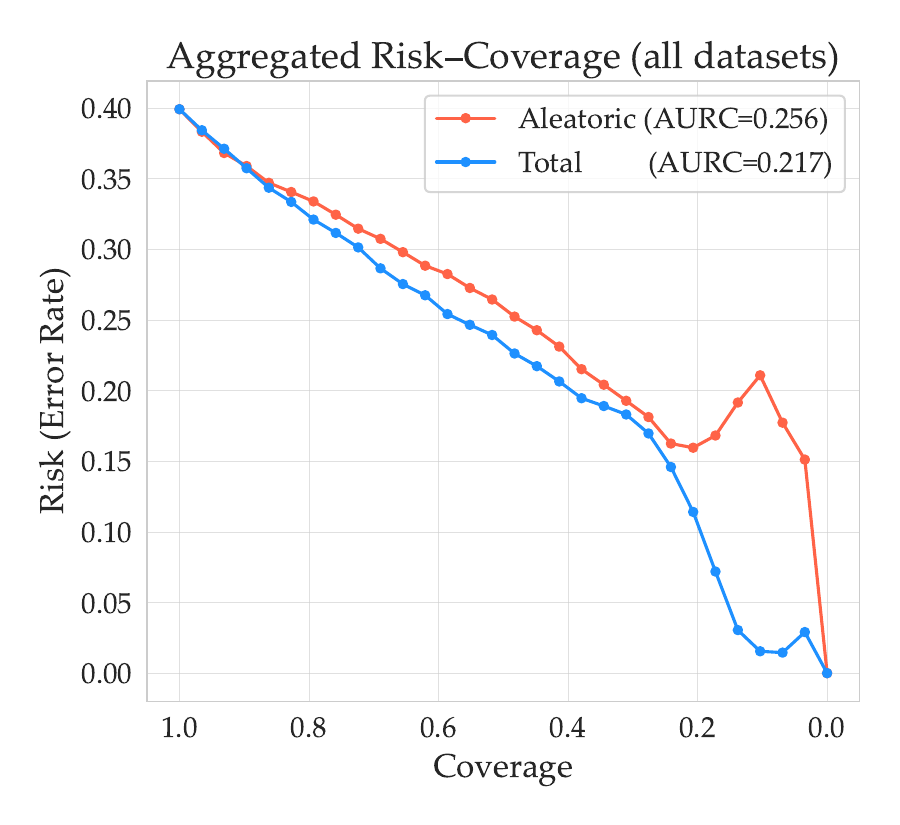}
        \vspace{-6pt}
        \label{fig:risk-coverage-agg}
    \end{subfigure}
    \hfill
    \begin{subfigure}[h]{0.45\textwidth}\centering
        \includegraphics[width=\linewidth]{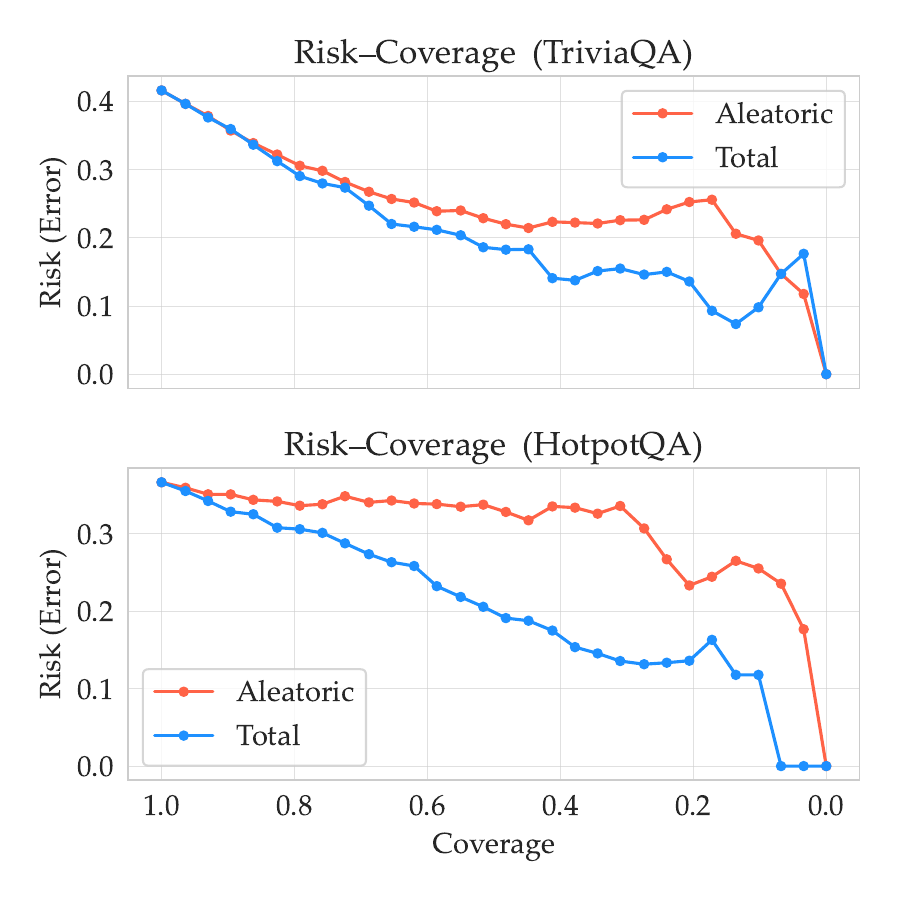}
        \vspace{-6pt}
        \label{fig:risk-coverage-two}
    \end{subfigure}
    \vspace{-6pt}
    \caption{Risk–coverage analysis shows that TU consistently improves selective prediction across datasets and in aggregate.}
    \vspace{-1.em}
    \label{fig:risk-coverage}
\end{figure}

\subsection{Total Uncertainty Improves Selective Abstention}
\label{sec:abstention}

To evaluate whether uncertainty effectively distinguishes reliable responses from potential errors, we consider selective prediction, where models are allowed to abstain from answering when uncertain. 

\textbf{Risk-Coverage Tradeoff. }
Figure~\ref{fig:risk-coverage}(a) shows the Risk-coverage curve for aleatoric and total uncertainty, aggregated across all models mentioned in Section ~\ref{sec:models}. Across all coverage levels and datasets, total uncertainty achieves the lowest risk, with a single exception. 
This suggests that total uncertainty more effectively identifies unreliable predictions in comparison to AU.

\textbf{Selective Accuracy and AURC.}
To quantify this effect more precisely, Table~\ref{tab:selective} reports selective accuracy at fixed coverage levels (C@90, C@80) and AURC (area under the risk–coverage curve) across benchmarks and averaged over different models. In nearly all cases, total uncertainty achieves higher selective accuracy and in all cases lower AURC compared to aleatoric and EU alone. For example, on \texttt{HotpotQA} and \texttt{XSum}, total uncertainty improves C@90 by over 1.5 points and reduces area under the risk–coverage curve (AURC~$\downarrow$) by over 20\%. These results confirm that TU yields better abstention behavior than AU or EU alone.
\begin{table}[!ht]
\centering
\caption{
Selective question answering performance of different uncertainty estimates. 
} 
\setlength{\tabcolsep}{4pt}
\begin{adjustbox}{width=\textwidth}
\begin{tabular}{l|ccc|ccc|ccc}
\toprule
\textbf{} & \multicolumn{3}{c|}{\textbf{C@90} (\%)} & \multicolumn{3}{c|}{\textbf{C@80} (\%)} & \multicolumn{3}{c}{\textbf{AURC} $\downarrow$} \\
\textbf{Dataset} & Aleatoric & Epistemic & Total & Aleatoric & Epistemic & Total & Aleatoric & Epistemic & Total \\
\midrule
\midrule
\texttt{AmbigQA}        & 56.0      & 52.4      & \textbf{56.2} & 59.5      & 53.2      & \textbf{60.2} & 0.325     & 0.456     & \textbf{0.278} \\
\texttt{CoQA}           & 96.0      & \textbf{96.9} & 96.0      & 96.5      & 97.5      & \textbf{97.8} & 0.026     & 0.016     & \textbf{0.011} \\
\texttt{GSM8K}          & 54.0      & \textbf{54.7} & 53.3      & \textbf{54.8} & 54.0      & 54.2      & 0.391     & 0.441     & \textbf{0.335} \\
\texttt{HotpotQA}       & 64.9      & 66.2      & \textbf{66.9} & 66.2      & 67.2      & \textbf{69.2} & 0.304     & 0.257     & \textbf{0.206} \\
\texttt{NQ-open}        & \textbf{53.8} & 51.3      & 53.1      & 57.2      & 52.8      & \textbf{57.5} & 0.388     & 0.484     & \textbf{0.323} \\
\texttt{QASPER}         & 38.0      & 36.9      & \textbf{39.1} & 39.5      & 37.5      & \textbf{40.8} & 0.533     & 0.602     & \textbf{0.503} \\
\texttt{TriviaQA}       & \textbf{64.0} & 60.4      & \textbf{64.0} & 69.0      & 61.8      & \textbf{70.2} & 0.254     & 0.343     & \textbf{0.208} \\
\texttt{TruthfulQA}     & \textbf{74.4} & 73.8      & 74.2      & \textbf{75.0} & \textbf{75.0} & 73.0      & 0.220     & 0.251     & \textbf{0.195} \\
\texttt{WMT16-de-en}    & 96.2      & \textbf{96.4} & 96.0      & 97.2      & 96.2      & \textbf{98.5} & 0.028     & 0.027     & \textbf{0.010} \\
\texttt{XSum}           & 24.4      & 24.9      & \textbf{25.6} & 26.5      & 22.0      & \textbf{27.3} & 0.681     & 0.759     & \textbf{0.609} \\
\bottomrule
\end{tabular}
\end{adjustbox}
\vspace{-1.em}
\label{tab:selective}
\end{table}

\section{Conclusions}
\label{sec:conclusions}

We propose that aleatoric and epistemic uncertainty capture complementary failure modes of language models: self-consistency methods reveal data ambiguity, while semantic disagreement across models uncovers uncertainty arising from model limitations. We operationalize this view by estimating TU as the combination of intra-model entropy and inter-model semantic divergence, using only black-box access to model outputs. 
We show that this combination effectively outperforms self-consistency-based methods across a wide range of models and datasets in terms of different metrics. 
While this approach requires access to multiple comparable models, it reveals the limits of single-model uncertainty scores and offers a practical path toward more comprehensive uncertainty estimation.

\textbf{Limitations. }
Our method relies on response-level semantic similarity, which may underperform in tasks with many semantically distinct but correct answers, e.g., open-ended generation or QA tasks where there are multiple distinct correct answers. In such cases, disagreement does not necessarily reflect uncertainty. 
Additionally, we focus on a specific form of AU; how to best combine our EU estimator with other AU and EU estimators (e.g., token-level or logit-based methods) is left for future work. 
Moreover, the performance of TU depends on the model ensemble: If all surrogate models share similar pre-training data or architectural biases, cross-model disagreement can underestimate true epistemic uncertainty. We examine this homogeneous-failure scenario in detail in Section~\ref{sec:agreement}. Finally, our evaluation hinges on a correctness judge; improvements in judge reliability will propagate to more precise AUROC and selective-risk estimates.

\newpage
\bibliography{references}
\bibliographystyle{iclr2026_conference}

\newpage
\appendix

\section{Appendix}

\subsection{Theoretical Interpretations of Epistemic Uncertainty}
\label{app:eu-theoretical}

\emph{Kernel and variational interpretation of $D(\omega  \ || \  \omega^*)$. }
Assume the similarity function $s(\cdot,\cdot)$ is a symmetric positive definite kernel $k$.  
Denote the predictive distributions by $P_\Omega := p(\cdot\mid x;\omega)$ and $P_{\omega^*}$.  
Their kernel mean embeddings in the reproducing kernel Hilbert space (RKHS) $\mathcal H_k$ are  
\[
\mu_\omega \;=\; \mathbb E_{r\sim P_\Omega}\big[k(r,\cdot)\big],
\qquad
\mu_{\omega^*} \;=\; \mathbb E_{q\sim P_{\omega^*}}\big[k(q,\cdot)\big].
\]

Using the reproducing property $\langle k(r,\cdot),k(r',\cdot)\rangle_{\mathcal H_k}=k(r,r')$,  
the divergence in Eq.~\ref{eq:D} can be rewritten exactly as:  
\begin{align}
D(\omega  \ || \  \omega^*) &= \langle\mu_\omega,\mu_\omega\rangle_{\mathcal H_k}
       \;-\;
       \langle\mu_\omega,\mu_{\omega^*}\rangle_{\mathcal H_k}.
\label{eq:kernel_gap}
\end{align}
Eq.~\ref{eq:kernel_gap} is the first two terms of the squared maximum mean discrepancy (MMD):
Eq.~\ref{eq:kernel_gap} is the first two terms of the squared maximum mean discrepancy (MMD): 
\[
\operatorname{MMD}^{2}(P_\Omega,P_{\omega^*})
      = \|\mu_\omega-\mu_{\omega^*}\|_{\mathcal H_k}^{2}
      = \underbrace{\|\mu_\omega\|^{2}_{\mathcal H_k} - \langle\mu_\omega,\mu_{\omega^*}\rangle_{\mathcal H_k}}_{D(\omega  \ || \  \omega^*)}
        +\|\mu_{\omega^*}\|^{2}_{\mathcal H_k}
        -\langle\mu_\omega,\mu_{\omega^*}\rangle_{\mathcal H_k}.
\]

Thus $ D(\omega  \ || \  \omega^*)$ is a \emph{one-sided kernel discrepancy}: it measures how much the model’s self-agreement exceeds its agreement with the ideal predictor, and it vanishes if and only if $\mu_\omega=\mu_{\omega^*}$ (and, for characteristic kernels, iff $P_\Omega=P_{\omega^*}$).

\emph{Variational-gap Interpretation.}  
Write classical KL as $\mathrm{KL}(P_\Omega\ || P_{\omega^*})=\mathrm{CE}-\mathrm{Ent}$, where  
$\mathrm{Ent}=\mathbb E_{r\sim P_\Omega}[-\log p(r\mid x;\omega)]$ and $\mathrm{CE}=\mathbb E_{r\sim P_\Omega}[-\log p(r\mid x)]$.  
Replacing $-\log$ with $-k$ yields
\[
D(\omega\ || \ \omega^*)
   \;=\;
   \underbrace{\mathbb E_{r,r'\sim P_\Omega}[k(r,r')]}_{\text{``negative kernel-entropy''}}
   \;-\;
   \underbrace{\mathbb E_{r\sim P_\Omega,\,q\sim P_{\omega^*}}[k(r,q)]}_{\text{``kernel cross-entropy''}},
\]
so $D$ is the \emph{semantic variational gap} between the model and the ideal distribution under the geometry induced by $k$.  
Minimizing $D$ therefore projects $P_\Omega$ toward $P_{\omega^*}$ in RKHS while simultaneously penalizing model variability in $P_\Omega$.

\label{app:se-bound}
\begin{lemma}[Bound on Kernel-Based Divergence]
Let $P_\omega = p(\cdot \mid x, \omega)$ and $P_{\omega^*} = p(\cdot \mid x, \omega^*)$ be the predictive distributions of two language models. Let $k(\cdot, \cdot)$ be a symmetric, bounded, positive-definite kernel such that $0 \leq k(r, r') \leq 1$ for all $r, r'$. Define the kernel-based divergence
\[
D(\omega \,\|\, \omega^*) := \mathbb{E}_{r, r' \sim P_\omega}[k(r, r')] - \mathbb{E}_{r \sim P_\omega, q \sim P_{\omega^*}}[k(r, q)].
\]
Then $D$ is bounded above in absolute value by the total variation distance between $P_\omega$ and $P_{\omega^*}$:
\[
|D(\omega \,\|\, \omega^*)| \leq \mathrm{TVD}(P_\omega, P_{\omega^*}),
\]
where
\[
\mathrm{TVD}(P_\omega, P_{\omega^*}) := \frac{1}{2} \int |p_\omega(z) - p_{\omega^*}(z)| dz.
\]

Furthermore, if the RKHS norm of the embeddings are equal, i.e., $\|\mu_\omega\| = \|\mu_{\omega^*}\|$, then $D(\omega \,\|\, \omega^*) = 0$ implies $\mu_\omega = \mu_{\omega^*}$. If $k$ is characteristic, this further implies $P_\omega = P_{\omega^*}$.

\end{lemma}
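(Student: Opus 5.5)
The plan is to write $D$ as the integral of a single bounded test function against the signed measure $P_\omega - P_{\omega^*}$, and then use the variational characterization of total variation. Throughout, densities $p_\omega, p_{\omega^*}$ are taken with respect to a common dominating measure (counting measure on sequences in the LLM case), written $dz$ as in the statement. By Fubini, which is justified because $k$ is bounded, I would rewrite
\[
D(\omega \,\|\, \omega^*) = \mathbb{E}_{r\sim P_\omega}\Big[ f_r \Big], \qquad f_r := \int k(r,z)\,\big(p_\omega(z) - p_{\omega^*}(z)\big)\,dz .
\]
It then suffices to show $|f_r| \le \mathrm{TVD}(P_\omega, P_{\omega^*})$ for every fixed $r$. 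Averaging over $r \sim P_\omega$ and applying Jensen's inequality, $|\mathbb{E} f_r| \le \mathbb{E}|f_r|$, then gives the claimed bound.

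For fixed $r$, the key step is to center the kernel. Since $\int (p_\omega - p_{\omega^*})\,dz = 0$, we can subtract a constant inside the integral without changing its value:
\[
f_r = \int \big(k(r,z) - \tfrac12\big)\big(p_\omega(z) - p_{\omega^*}(z)\big)\,dz .
\]
The hypothesis $0 \le k \le 1$ gives $|k(r,z) - \tfrac12| \le \tfrac12$. Hence
\[
|f_r| \le \tfrac12 \int |p_\omega - p_{\omega^*}|\,dz = \mathrm{TVD}(P_\omega, P_{\omega^*}).
\]
This centering trick is the only point where the range assumption on $k$ is genuinely used. A naive bound $|k| \le 1$ would lose a factor of $2$, so I expect getting the constant right here to be the main (minor) obstacle. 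Equivalently, one can invoke $\mathrm{TVD} = \sup_{0\le g\le 1} \big|\int g\,(p_\omega - p_{\omega^*})\big|$ with $g = k(r,\cdot)$.

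For the second part I would use the RKHS rewriting in Eq.~\ref{eq:kernel_gap}, namely $D = \|\mu_\omega\|^2 - \langle \mu_\omega, \mu_{\omega^*}\rangle$; positive-definiteness and boundedness of $k$ guarantee that the mean embeddings exist. If $D = 0$, then $\langle \mu_\omega, \mu_{\omega^*}\rangle = \|\mu_\omega\|^2$. Combined with $\|\mu_\omega\| = \|\mu_{\omega^*}\|$, this yields
\[
\|\mu_\omega - \mu_{\omega^*}\|^2 = \|\mu_\omega\|^2 + \|\mu_{\omega^*}\|^2 - 2\langle \mu_\omega, \mu_{\omega^*}\rangle = 2\|\mu_\omega\|^2 - 2\|\mu_\omega\|^2 = 0,
\]
so $\mu_\omega = \mu_{\omega^*}$. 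Alternatively, this is the equality case of Cauchy--Schwarz with equal norms. Finally, if $k$ is characteristic, the mean-embedding map $P \mapsto \mu_P$ is injective by definition, and therefore $P_\omega = P_{\omega^*}$.
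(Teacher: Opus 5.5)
Your proof is correct and follows essentially the paper's route: you write $D$ as the integral of a $[0,1]$-valued test function against $P_\omega - P_{\omega^*}$ and apply the variational form of total variation, and you handle the second part by the Cauchy--Schwarz/norm-expansion argument. The paper averages first, using $f(z)=\mathbb{E}_{r\sim P_\omega}[k(r,z)]$, whereas you bound each $k(r,\cdot)$ separately and then average, which makes no difference; your centering step is actually more careful than the paper's, whose supremum over $\|f\|_\infty \le 1$ equals $2\,\mathrm{TVD}$ under the stated normalization, while the correct class is, as you note, $0 \le g \le 1$.
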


\begin{proof}
Define the kernel-smoothed function $f(z) := \mathbb{E}_{r \sim P_\omega}[k(r, z)]$. Then we can write
\[
D(\omega \,\|\, \omega^*) = \mathbb{E}_{z \sim P_\omega}[f(z)] - \mathbb{E}_{z \sim P_{\omega^*}}[f(z)].
\]
By the definition of total variation distance and the fact that $0 \leq f(z) \leq 1$ for all $z$,
\[
|D(\omega \,\|\, \omega^*)| \leq \sup_{\|f\|_\infty \leq 1} \left| \mathbb{E}_{P_\omega}[f] - \mathbb{E}_{P_{\omega^*}}[f] \right| = \mathrm{TVD}(P_\omega, P_{\omega^*}).
\]
For the second part, note that $D(\omega \,\|\, \omega^*) = \|\mu_\omega\|^2 - \langle \mu_\omega, \mu_{\omega^*} \rangle$. If $\|\mu_\omega\| = \|\mu_{\omega^*}\|$ and $D = 0$, then by Cauchy–Schwarz equality, we must have $\mu_\omega = \mu_{\omega^*}$. If the kernel is characteristic, then $\mu_\omega = \mu_{\omega^*}$ implies $P_\omega = P_{\omega^*}$.
\end{proof}

This result implies that $D(\omega \,\|\, \omega^*)$ provides a lower bound on distributional mismatch, and thus serves as a tractable proxy for epistemic uncertainty: it is provably small only when the model's predictive distribution aligns with that of the ensemble under the kernel geometry.

Under desired conditions mentioned in~\ref{sec:eu-method}, and with a bounded characteristic kernel, $D(\omega\ || \ \omega^*)=0$ if, and only if, the predictive distribution $P_{\omega}$ is close to the model set consensus as shown above. Consequently $D$ (i) flags cases where the model's intra-model similarity is high (AU is low) while its agreement with the ensemble is low, (ii) rewards calibrated agreement by attaining its minimum only within the support of the ensemble, and (iii) remains sample‑efficient, as it only requires $\mathcal{O}(n^{2})$ kernel evaluations on $n$ generated responses per model, which is cheaper than KL divergence evaluations or other approximations of EU. 

\subsection{Agreement and Coverage Metrics. }
\label{app:G-J}
For every dataset we form the binary correctness matrix $C_{ij}$ which is the correctness of response $r_i^{(j)}$ sampled from model $j$ to input $i$, and compute two coarse descriptors of cross-model behaviour:
\begin{itemize}
    \item \textbf{Jaccard redundancy} $J$ -- The mean pairwise Jaccard index of the sets of correctly answered examples:
    $$ J = \frac{2}{M(M-1)} \sum_{1 \leq m < k \leq M} \frac{|S_m \cap S_k|}{|S_m \cup S_k|}, \quad S_m = \{ i : C_{im} = 1 \} $$
    High $J$ means models succeed on the same inputs.

    \item \textbf{Oracle-gain diversity} $G$ -- the additional coverage obtained by an oracle that chooses the correct model per example:
    $$ G = A_{\text{oracle}} - \max_m A_m = \frac{1}{N} \sum_{i=1}^N \mathbb{1}\left[ \sum_{j=1}^M C_{ij} > 0 \right] - \max_m \left( \frac{1}{N} \sum_{i=1}^N C_{im} \right) $$
    High $G$ indicates that different models get different examples right.
\end{itemize}

\subsection{Additional Experimental Setup}
\label{app:additional_setup}

All experiments were conducted on two NVIDIA A100 80GB GPUs. We use the \texttt{lm-evaluation-harness}~\citep{lintang_sutawika_2023_10256836} codebase to generate responses from each model, and evaluate correctness using a local vLLM~\citep{kwon2023efficient} server hosting \texttt{Meta-Llama-3-70B-Instruct}~\citep{grattafiori2024llama} as the judge model. Following prior work~\citep{lin2023generating}, we compute correctness using only the first sampled response from each model. All evaluation is conducted in inference-only mode; no training or fine-tuning is performed.

For each dataset, we sample 10 responses per model for the first 100 prompts. AU is computed using all 10 responses. To match the sample budget with TU, the experiments in Section~\ref{sec:eval-auroc} and Section~\ref{sec:abstention} report results using only 2 responses per model for computing both TU and epistemic uncertainty. We use a temperature of 0.7 and top-p of 0.9 across all generations. For \texttt{SimpleQA}, model outputs are obtained from the OpenAI and Anthropic APIs.

Semantic similarity between responses is measured using cosine distance over \texttt{sentence-T5-xl}~\citep{hou2022scaling,ni2021sentence} embeddings. For datasets not originally supported by \texttt{lm-eval-harness}, we follow its prompt formatting conventions and include code for these additions in the supplementary material.

\subsection{Additional Results on Total Uncertainty}
\label{app:additional_auroc}

Figure~\ref{fig:auroc-per-model} reports the AUROC of aleatoric and total uncertainty across all model–dataset pairs, and Figure~\ref{fig:auroc-diff-per-model} shows the corresponding improvements from using TU over AU. Total uncertainty consistently improves correctness discrimination in nearly all cases, with the largest per-instance gains observed on \texttt{HotpotQA}, a benchmark known for complex multi-hop reasoning.

\paragraph{AUROC Results Per Model and Dataset. }
Figure~\ref{fig:auroc-per-model} and Table~\ref{tab:auroc} show AUROC of TU and AU in all model-dataset combinations, and Figure~\ref{fig:auroc-diff-per-model} shows the improvement of total uncertainty over AU in AUROC. TU improves AUROC in nearly all model–dataset combinations, with the largest gains observed in \texttt{HotpotQA}. 
Model performance substantially affects the magnitude of improvement. On datasets such as \texttt{XSum}, where overall model accuracy is low, TU yields large improvements for weaker models but occasionally underperforms for the strongest ones (e.g., \texttt{Llama} and \texttt{Qwen}), potentially due to disagreement with less reliable auxiliary models. A similar pattern holds on \texttt{GSM8K}, where gains are concentrated among lower-performing models, while others benefit less.

In contrast, \texttt{WMT16-de-en} and \texttt{CoQA} show limited gains, likely due to the high baseline accuracy ($>90\%$) of all reference models (see Fig.~\ref{fig:acc-per-model}), where AU is already well-calibrated. Notably, TU corrects miscalibrated AU for specific models, such as \texttt{Mistral} on \texttt{CoQA}, where the base AU is anomalously low. On \texttt{TruthfulQA}, which features open-ended questions with diverse valid answers, semantic disagreement does not reliably indicate epistemic uncertainty, which results in weaker improvements.

\begin{figure}[h]
    \centering
    \includegraphics[width=1.\textwidth]{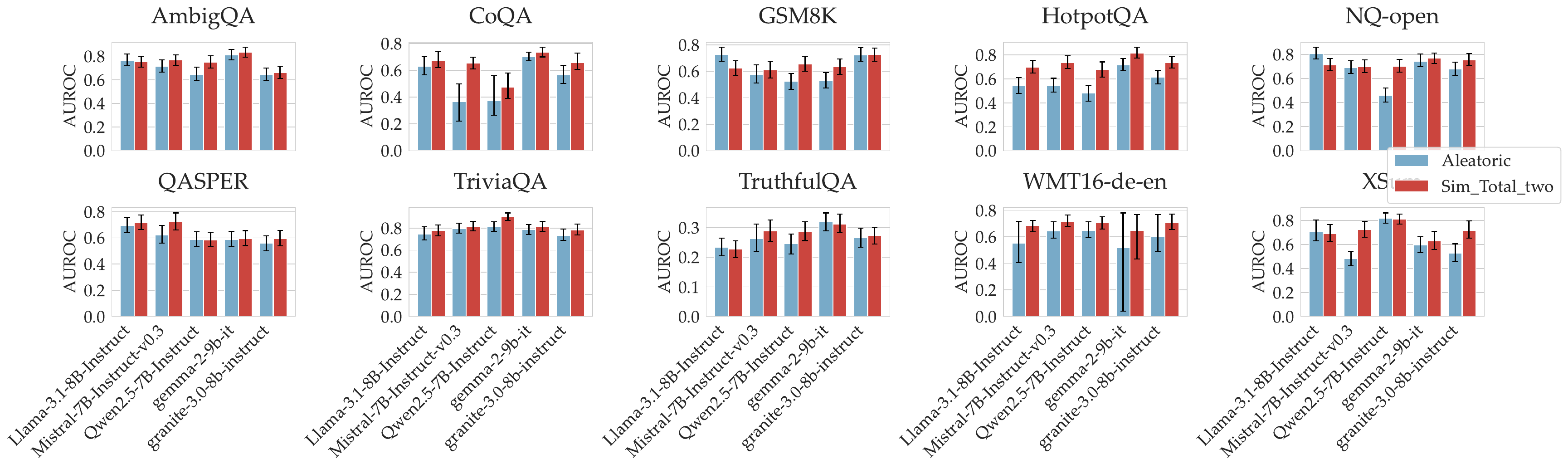}
    \caption{We show AUROC for each model separately to compare aleatoric and total uncertainty. TU consistently yields higher AUROC across models. We subsample 80\% of the questions per dataset, 1000 times, to compute AUROC with 5\% confidence intervals around the median AUROC value.}
    \label{fig:auroc-per-model}
\end{figure}

\begin{figure}[h]
    \centering
    \includegraphics[width=1.\textwidth]{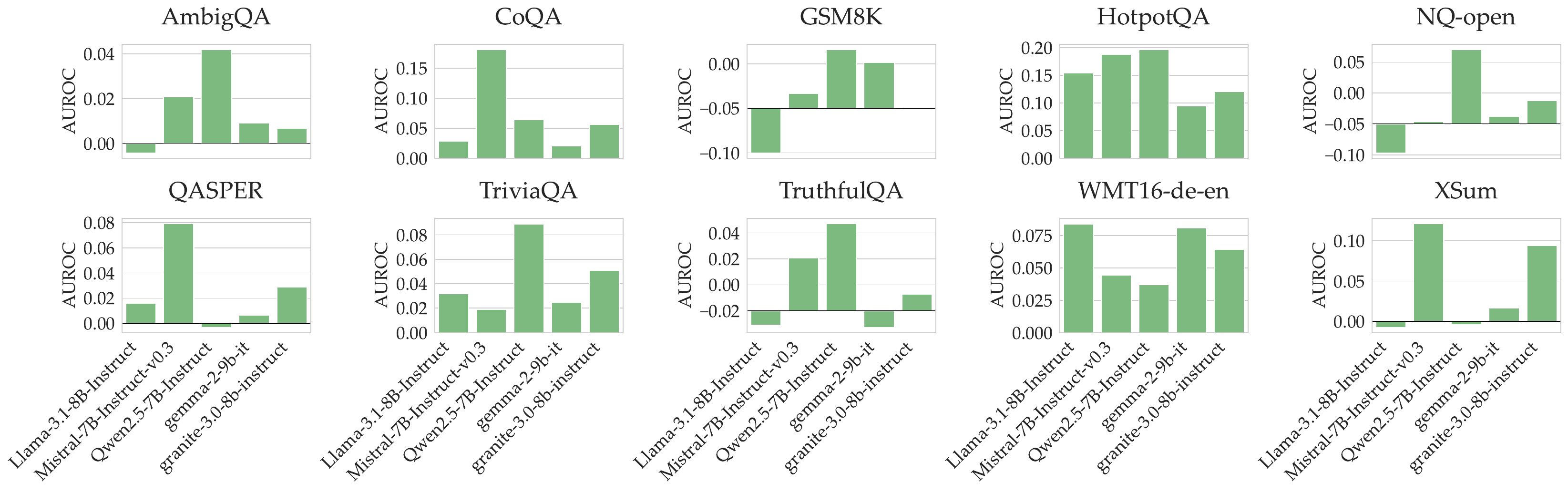}
    \caption{AUROC improvement obtained by adding EU to AU across all samples per dataset, measured as (Total $-$ Aleatoric).
    }
    \label{fig:auroc-diff-per-model}
\end{figure}

\begin{figure}[h]
    \centering
    \includegraphics[width=1.\textwidth]{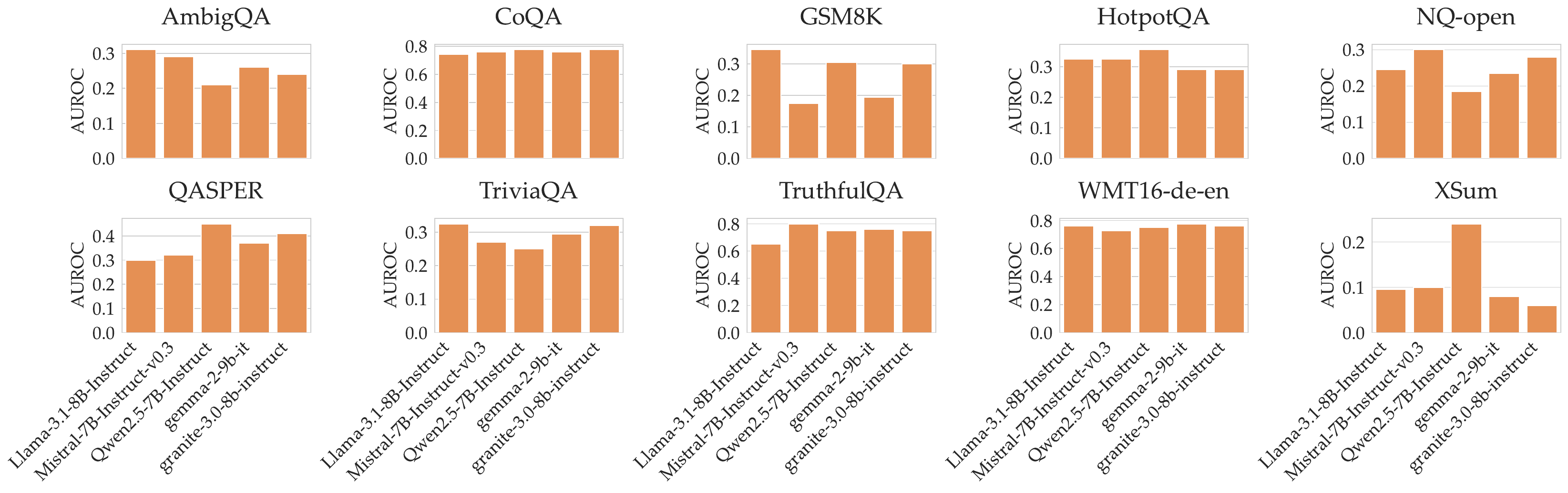}
    \caption{Accuracy per model-dataset pair.}
    \label{fig:acc-per-model}
\end{figure}

\begin{table}[ht]
\centering
\caption{Uncertainty AUROC scores across models and benchmarks when different 7B/8B/9B parameter models are used as auxiliary models. Total Uncertainty is better calibrated with correctness than Aleatoric Uncertainty.}
\begin{adjustbox}{width=\textwidth}
\begin{tabular}{c | c | c | lll}
\toprule
\textbf{Benchmark}       & \textbf{Model}       & \textbf{Accuracy} & \multicolumn{1}{c}{\textbf{Aleatoric AUROC}} & \multicolumn{1}{c}{\textbf{Epistemic AUROC}} & \multicolumn{1}{c}{\textbf{Total AUROC}} \\
\midrule
\midrule
\multirow{5}{*}{\textbf{AmbigQA}}     & \texttt{Llama-3.1-8B-Instruct}    & 0.62 & \textbf{0.764}     & 0.52  & 0.753          \\
            & \texttt{Mistral-7B-Instruct-v0.3} & 0.58 & 0.716 & 0.506 & \textbf{0.768} \\
            & \texttt{Qwen2.5-7B-Instruct}      & 0.42 & 0.645 & 0.639 & \textbf{0.75}  \\
            & \texttt{gemma-2-9b-it}            & 0.52 & 0.81  & 0.447 & \textbf{0.833} \\
            & \texttt{granite-3.0-8b-instruct}  & 0.48 & 0.644 & 0.488 & \textbf{0.661} \\
\midrule
\multirow{5}{*}{\textbf{CoQA}}        & \texttt{Llama-3.1-8B-Instruct}    & 0.93 & 0.788 & 0.797 & \textbf{0.845} \\
            & \texttt{Mistral-7B-Instruct-v0.3} & 0.95 & 0.458 & 0.80   & \textbf{0.819} \\
            & \texttt{Qwen2.5-7B-Instruct}      & 0.97 & 0.466 & 0.567 & \textbf{0.595} \\
            & \texttt{gemma-2-9b-it}            & 0.95 & 0.876 & 0.80   & \textbf{0.918} \\
            & \texttt{granite-3.0-8b-instruct}  & 0.97 & 0.708 & \textbf{0.907}     & 0.821          \\
\midrule
\multirow{5}{*}{\textbf{GSM8K}}       & \texttt{Llama-3.1-8B-Instruct}    & 0.69 & \textbf{0.727}     & 0.346 & 0.626          \\
            & \texttt{Mistral-7B-Instruct-v0.3} & 0.35 & 0.577 & 0.606 & \textbf{0.61}  \\
            & \texttt{Qwen2.5-7B-Instruct}      & 0.61 & 0.524 & 0.623 & \textbf{0.656} \\
            & \texttt{gemma-2-9b-it}            & 0.39 & 0.531 & 0.601 & \textbf{0.634} \\
            & \texttt{granite-3.0-8b-instruct}  & 0.60  & 0.726 & 0.472 & \textbf{0.727} \\
\midrule
\multirow{5}{*}{\textbf{HotpotQA}}    & \texttt{Llama-3.1-8B-Instruct}    & 0.65 & 0.546 & 0.662 & \textbf{0.7}   \\
            & \texttt{Mistral-7B-Instruct-v0.3} & 0.65 & 0.548 & 0.682 & \textbf{0.736} \\
            & \texttt{Qwen2.5-7B-Instruct}      & 0.71 & 0.483 & \textbf{0.681}     & 0.679          \\
            & \texttt{gemma-2-9b-it}            & 0.58 & 0.719 & 0.608 & \textbf{0.814} \\
            & \texttt{granite-3.0-8b-instruct}  & 0.58 & 0.615 & 0.634 & \textbf{0.736} \\
\midrule
\multirow{5}{*}{\textbf{NQ-open}}     & \texttt{Llama-3.1-8B-Instruct}    & 0.49 & \textbf{0.808}     & 0.389 & 0.713          \\
            & \texttt{Mistral-7B-Instruct-v0.3} & 0.60  & 0.69  & 0.501 & \textbf{0.698} \\
            & \texttt{Qwen2.5-7B-Instruct}      & 0.37 & 0.462 & 0.696 & \textbf{0.703} \\
            & \texttt{gemma-2-9b-it}            & 0.47 & 0.743 & 0.538 & \textbf{0.768} \\
            & \texttt{granite-3.0-8b-instruct}  & 0.56 & 0.678 & 0.541 & \textbf{0.754} \\
\midrule
\multirow{5}{*}{\textbf{QASPER}}      & \texttt{Llama-3.1-8B-Instruct}    & 0.30  & 0.694 & 0.487 & \textbf{0.714} \\
            & \texttt{Mistral-7B-Instruct-v0.3} & 0.32 & 0.623 & 0.657 & \textbf{0.722} \\
            & \texttt{Qwen2.5-7B-Instruct}      & 0.45 & \textbf{0.587}     & 0.492 & 0.583          \\
            & \texttt{gemma-2-9b-it}            & 0.37 & 0.588 & 0.509 & \textbf{0.596} \\
            & \texttt{granite-3.0-8b-instruct}  & 0.41 & 0.56  & 0.512 & \textbf{0.596} \\
\midrule
\multirow{5}{*}{\textbf{TriviaQA}}    & \texttt{Llama-3.1-8B-Instruct}    & 0.65 & 0.744 & 0.562 & \textbf{0.776} \\
            & \texttt{Mistral-7B-Instruct-v0.3} & 0.54 & 0.796 & 0.552 & \textbf{0.815} \\
            & \texttt{Qwen2.5-7B-Instruct}      & 0.5  & 0.811 & 0.668 & \textbf{0.9}   \\
            & \texttt{gemma-2-9b-it}            & 0.59 & 0.787 & 0.645 & \textbf{0.812} \\
            & \texttt{granite-3.0-8b-instruct}  & 0.64 & 0.733 & 0.575 & \textbf{0.784} \\
\midrule
\multirow{5}{*}{\textbf{TruthfulQA}}  & \texttt{Llama-3.1-8B-Instruct}    & 0.65 & \textbf{0.47}      & 0.423 & 0.456          \\
            & \texttt{Mistral-7B-Instruct-v0.3} & 0.80  & 0.528 & \textbf{0.601}     & 0.579          \\
            & \texttt{Qwen2.5-7B-Instruct}      & 0.75 & 0.494 & \textbf{0.584}     & 0.578          \\
            & \texttt{gemma-2-9b-it}            & 0.76 & \textbf{0.641}     & 0.541 & 0.625          \\
            & \texttt{granite-3.0-8b-instruct}  & 0.75 & 0.532 & \textbf{0.556}     & 0.548          \\
\midrule
\multirow{5}{*}{\textbf{WMT16-de-en}} & \texttt{Llama-3.1-8B-Instruct}    & 0.95 & 0.691 & 0.695 & \textbf{0.859} \\
            & \texttt{Mistral-7B-Instruct-v0.3} & 0.91 & 0.808 & 0.835 & \textbf{0.897} \\
            & \texttt{Qwen2.5-7B-Instruct}      & 0.94 & 0.809 & 0.832 & \textbf{0.883} \\
            & \texttt{gemma-2-9b-it}            & 0.97 & 0.649 & 0.663 & \textbf{0.811} \\
            & \texttt{granite-3.0-8b-instruct}  & 0.95 & 0.755 & 0.493 & \textbf{0.884} \\
\midrule
\multirow{5}{*}{\textbf{XSum}}        & \texttt{Llama-3.1-8B-Instruct}    & 0.19 & \textbf{0.708}     & 0.37  & 0.693          \\
            & \texttt{Mistral-7B-Instruct-v0.3} & 0.20  & 0.482 & 0.651 & \textbf{0.725} \\
            & \texttt{Qwen2.5-7B-Instruct}      & 0.48 & \textbf{0.819}     & 0.31  & 0.811          \\
            & \texttt{gemma-2-9b-it}            & 0.16 & 0.598 & 0.565 & \textbf{0.631} \\
            & \texttt{granite-3.0-8b-instruct}  & 0.12 & 0.529 & 0.582 & \textbf{0.717}             \\
\bottomrule
\end{tabular}
\label{tab:auroc}
\end{adjustbox}
\end{table}

\newpage
\paragraph{ROC Curves.}
Figure~\ref{fig:roc-total-all} shows the ROC curve computed over the pooled set of all model–dataset pairs. TU achieves a higher AUROC (0.746 vs. 0.707), which shows improved seperation between correct and incorrect generations compared to AU alone. Figure~\ref{fig:roc-per-dataset} presents ROC curves for individual datasets. TU yields consistently better or comparable performance across all tasks, with the largest gains observed on \texttt{HotpotQA}, \texttt{WMT16-de-en}, and \texttt{CoQA}. These improvements align with our earlier findings that TU is most effective on tasks where models are accurate but occasionally confidently wrong.

\begin{figure}[ht]
    \centering
    \includegraphics[width=0.4\textwidth,trim={0 0 0 1.1cm},clip]{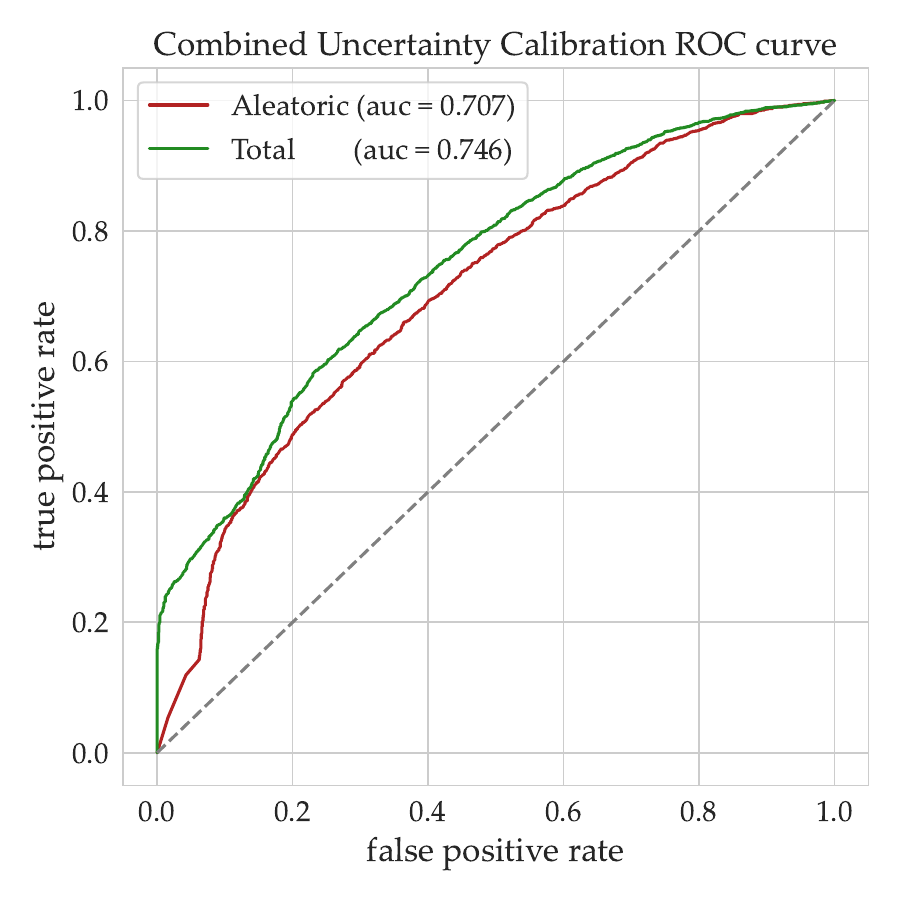}
    \caption{ROC curves between aleatoric and total uncertainty aggregated across all models and datasets. Total uncertainty achieves higher AUROC, indicating better discrimination between correct and incorrect generations.}
    \vspace{-1.5em}
    \label{fig:roc-total-all}
\end{figure}

\begin{figure}[!ht]
    \centering
    \includegraphics[width=\textwidth]{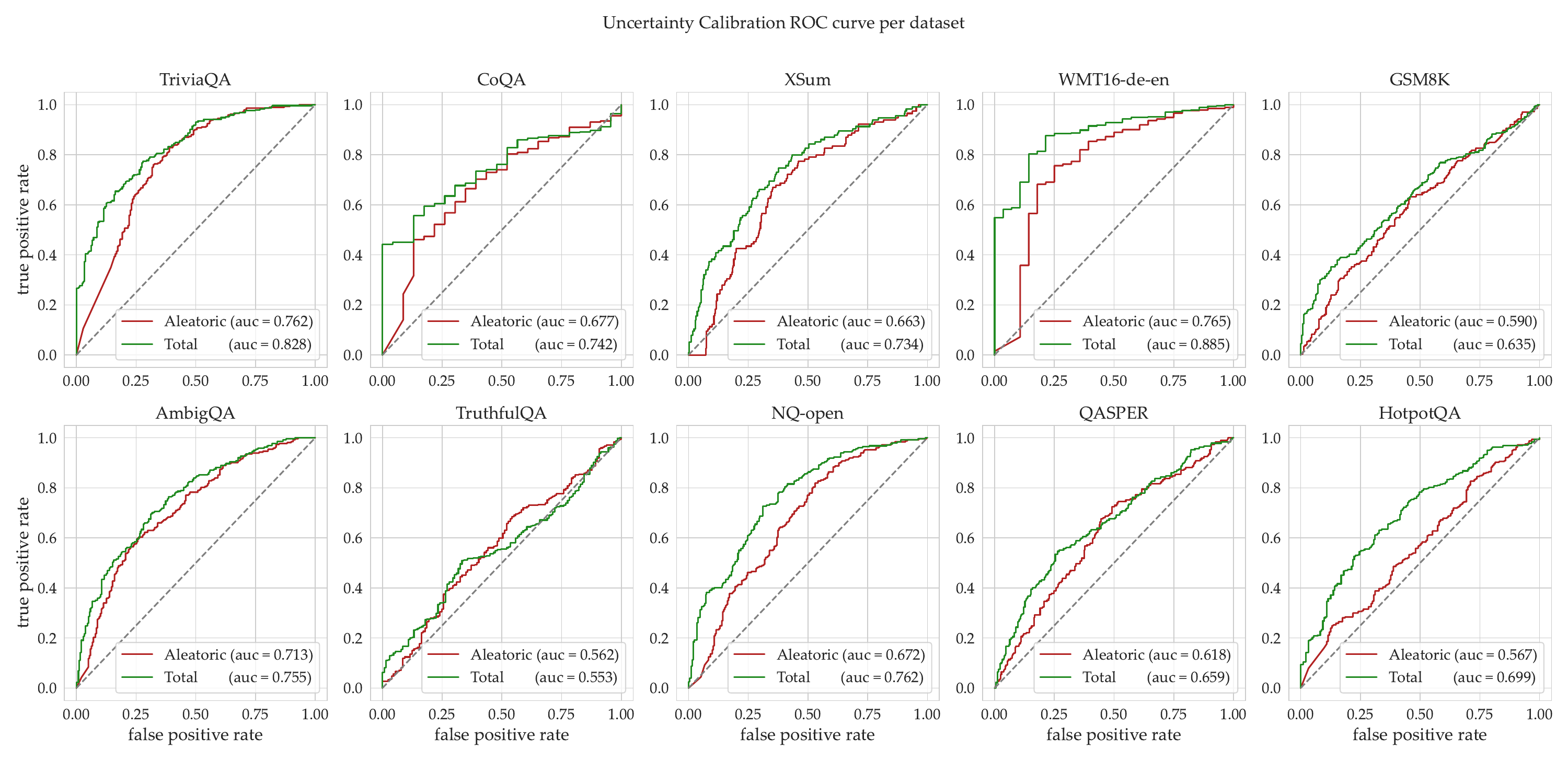}
    \caption{ROC curves comparing aleatoric and total uncertainty across individual datasets. TU achieves higher AUROC across most tasks, particularly on \texttt{HotpotQA}, \texttt{WMT16-de-en}, and \texttt{CoQA}, where models exhibit confident failures.}
    \label{fig:roc-per-dataset}
\end{figure}

\newpage

\paragraph{Comparison with Baselines.}
We compare TU against a number of baselines: 
Mean Token Entropy~\citep{fomicheva2020unsupervised}, Maximum Token Probability~\citep{fadeeva2023lm}, Maximum Sequence Probability~\citep{fadeeva2023lm}, Perplexity~\citep{fomicheva2020unsupervised}, PTrue~\citep{kadavath2022language}, Self-Certainty~\citep{kang2025scalable}, Semantic Entropy~\citep{kuhn2023semantic}, SC (Self-Consistency) Score~\citep{wang2022self,manakul2023selfcheckgpt}, SelfCheckGPT~\citep{manakul2023selfcheckgpt}, Kernel Language Entropy~\citep{nikitin2024kernel}, Closeness Centrality, SC + VC (Verbalized confidence), and SC based CV~\citep{jiang2024graph}. We use implementations from~\citet{fadeeva2023lm,jiang2024graph}.
AU baselines is the same as~\citep{lin2023generating}, but instead of using entailment with Deberta to compute response sequence similarity, we use a \texttt{sentence-T5-xl} model.
Results are provided in Table~\ref{tab:auroc-baselines}.

\begin{table}[ht]
\centering
\caption{Uncertainty AUROC scores across benchmarks for \texttt{Mistral-7B} when different 7B/8B/9B parameter models are used as auxiliary models. Total Uncertainty is better calibrated with correctness than Aleatoric Uncertainty and other baselines.}
\begin{adjustbox}{width=\textwidth}
\begin{tabular}{c |llllllll | l}
\toprule
\multicolumn{1}{l}{}          & \multicolumn{1}{c}{\texttt{AmbigQA}} & \multicolumn{1}{c}{\texttt{CoQA}} & \multicolumn{1}{c}{\texttt{GSM8K}} & \multicolumn{1}{c}{\texttt{HotpotQA}} & \multicolumn{1}{c}{\texttt{NQ-open}} & \multicolumn{1}{c}{\texttt{QASPER}} & \multicolumn{1}{c}{\texttt{TriviaQA}} & \multicolumn{1}{c}{\texttt{TruthfulQA}} & \multicolumn{1}{c}{\textbf{Average}} \\
\midrule
\midrule
\textbf{Total}    & \textbf{0.768}  & \textbf{0.819} & 0.61 & 0.736 & \textbf{0.698}  & \textbf{0.722} & \textbf{0.815} & 0.579   & \textbf{0.718}  \\
\textbf{Aleatoric}            & 0.716         & 0.458      & 0.577       & 0.548 & 0.69 & 0.623        & 0.796 & 0.528   & 0.617         \\
\midrule
\textbf{Closeness Centrality} & 0.683         & 0.612      & 0.56        & \textbf{0.739} & 0.597         & 0.579        & 0.702 & \textbf{0.639}   & 0.639         \\
\textbf{SC Score} & 0.649         & 0.553      & 0.551       & 0.711 & 0.616         & 0.567        & 0.673 & 0.603   & 0.615         \\
\textbf{SC Based VC}          & 0.658         & 0.483      & 0.555       & 0.677 & 0.63 & 0.568        & 0.706 & 0.602   & 0.61 \\
\textbf{SemanticEntropy}      & 0.678         & 0.561      & 0.584       & 0.526 & 0.656         & 0.514        & 0.637 & 0.514   & 0.584         \\
\textbf{SC + VC}  & 0.671         & 0.411      & 0.544       & 0.51  & 0.641         & 0.581        & 0.731 & 0.581   & 0.584         \\
\textbf{Max Sequence Prob.}   & 0.651         & 0.561      & 0.496       & 0.556 & 0.64 & 0.48         & 0.66  & 0.534   & 0.572         \\
\textbf{Mean Token Entropy}   & 0.654         & 0.498      & 0.461       & 0.559 & 0.674         & 0.481        & 0.696 & 0.544   & 0.571         \\
\textbf{Token Entropy}        & 0.654         & 0.502      & 0.465       & 0.558 & 0.672         & 0.481        & 0.69  & 0.542   & 0.57 \\
\textbf{Perplexity}           & 0.661         & 0.513      & 0.462       & 0.548 & 0.652         & 0.464        & 0.672 & 0.527   & 0.562         \\
\textbf{Max Token Prob.}      & 0.658         & 0.51       & 0.464       & 0.546 & 0.652         & 0.468        & 0.674 & 0.528   & 0.562         \\
\textbf{Self Certainty}       & 0.569         & 0.51       & 0.459       & 0.553 & 0.598         & 0.484        & 0.63  & 0.586   & 0.549         \\
\textbf{PTrue}    & 0.604         & 0.555      & 0.371       & 0.417 & 0.55 & 0.423        & 0.519 & 0.451   & 0.486      \\
\textbf{SelfCheckGPT}            & 0.733         & 0.454      & \textbf{0.77}        & 0.5262 & 0.688         & 0.619        & 0.667 & 0.534   & 0.603         \\
\textbf{Kernel Lang. Ent.}       & 0.785         & 0.642      & 0.637       & 0.491 & 0.585         & 0.573        & 0.723 & 0.637   & 0.634         \\
\bottomrule
\end{tabular}
\label{tab:auroc-baselines}
\end{adjustbox}
\end{table}

\paragraph{Correlation with BLEU and ROUGE.}
\label{app:bleu-rouge}
For the two generative tasks (\texttt{WMT16-de-en} and \texttt{XSum}), we additionally evaluate the correlation between uncertainty scores and standard automatic metrics (BLEU for translation, ROUGE for summarization). As shown in Table~\ref{tab:bleu-rouge}, both AU and TU show a moderate negative correlation with BLEU on \texttt{WMT16-de-en}, consistent with higher uncertainty aligning with lower translation quality. On \texttt{XSum}, AU shows a positive correlation with ROUGE, while EU exhibits a negative correlation ($-0.213$), which suggests that EU is a better proxy for quality degradation in summarization than AU in this setting.

\begin{table}[h]
\centering
\caption{Pearson correlation between uncertainty scores and automatic evaluation metrics (BLEU for \texttt{WMT16-de-en}, ROUGE for \texttt{XSum}).}
\resizebox{0.6\columnwidth}{!}{%
\begin{tabular}{llccc}
\toprule
\textbf{Dataset} & \textbf{Metric} & \textbf{Aleatoric} & \textbf{Epistemic} & \textbf{Total} \\
\midrule
\texttt{WMT16-de-en} & BLEU  & -0.319 & -0.115 & -0.324 \\
\texttt{XSum}         & ROUGE & +0.162 & -0.213 & -0.056 \\
\bottomrule
\end{tabular}}
\label{tab:bleu-rouge}
\end{table}

\subsection{The Effect of the Auxiliary Model Set on Total Uncertainty}
\label{app:eval-auxset}

\paragraph{Ablating the Reference Model Size.}
We test how the quality of total uncertainty estimates depends on the capability of the \emph{reference model}, whose uncertainty we aim to estimate. We fix the auxiliary model set to a pool of four 7-9B models mentioned in~\ref{sec:models} that are not from the same family as the reference model, and vary the reference model's architecture and size. 

Figure~\ref{fig:ref-model-ablation} reports results on \texttt{TriviaQA}, using two model families (\texttt{Gemma3}~\citep{team2025gemma} and \texttt{Qwen2.5}~\citep{yang2024qwen2}) of various sizes. As the size of the reference model increases, both aleatoric and total uncertainty AUROC scores tend to decrease, but total uncertainty has consistently higher AUROC across different model sizes.
This holds even when the reference model is substantially stronger than any model in the auxiliary set (e.g., \texttt{Qwen2.5-32B} vs. 7-9B peers). 

\begin{figure}[h]
    \centering
    \includegraphics[width=0.48\textwidth]{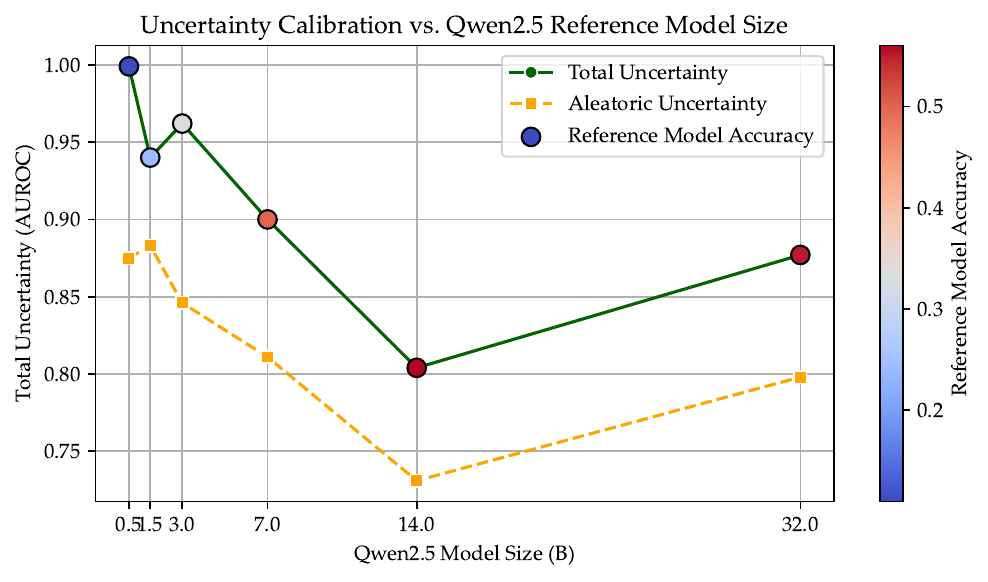}
    \includegraphics[width=0.48\textwidth]{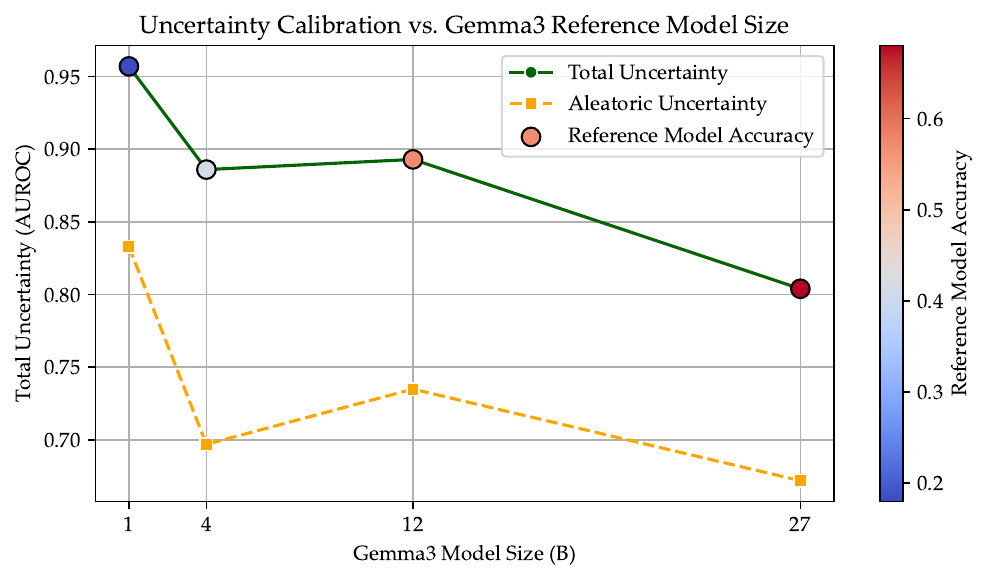}
    \caption{We vary the size of the \textbf{reference model} while holding the auxiliary models fixed. TU achieves higher AUROC in comparison to AU acorss different model sizes on \texttt{TriviaQA}.}
    \label{fig:ref-model-ablation}
\end{figure}

\paragraph{Noise-perturbed Auxiliary Model Set.}
We consider noise-perturbed variants of the reference model itself as auxiliary model set, similar to \citet{liu2025enhancing}. Specifically, we apply a perturbation strategy in which we preserve the top-$k$ singular vectors of each linear weight matrix and inject Gaussian noise into the remaining lower-rank subspace. This preserves dominant components of the model while 
allowing for controlled noise in its response distribution.
Figure~\ref{fig:auroc-noise} in the appendix shows that the we sometimes obtain improvement in TU-AU, but it is overall lower than for the more diverse auxiliary model set from Figure~\ref{fig:auroc-diff}. 

\begin{figure}[ht]
    \centering
    \includegraphics[width=1.0\textwidth]{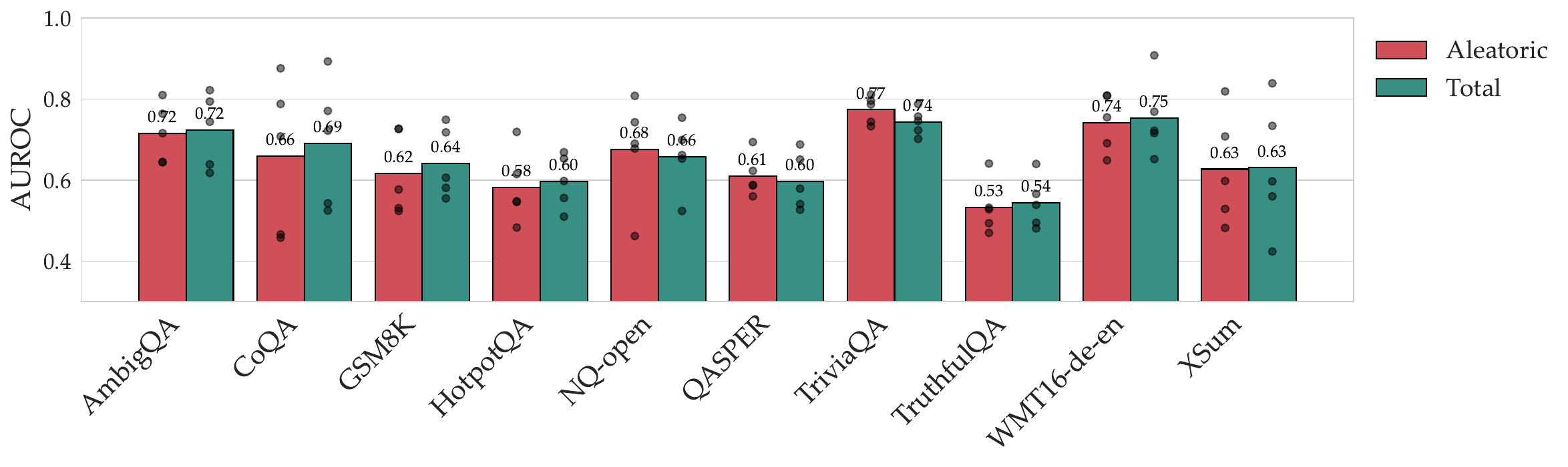}
    \caption{Uncertainty calibration for experiments where auxiliary model set for each model is consisted of multiple noise-perturbed models}
    \label{fig:auroc-noise}
\end{figure}

\subsection{Ablations}
\label{app:auroc-abl}

\paragraph{Number of Auxiliary Models.}

We study how the size of the auxiliary model set affects the quality of total uncertainty estimates. For each reference model, we compute total uncertainty using $n \in \{2, 3, 4, 5\}$ models, where one model is fixed (the reference model) and the remaining $n-1$ are sampled from other model families. All methods use a fixed number of samples per model.

Figure~\ref{fig:ablation-num-models} shows that total uncertainty improves monotonically as the number of auxiliary models increases. This holds across almost all tasks, with the largest gains typically occurring between $n=2$ and $n=3$. In addition, we observe that variance across runs decreases as more models are added, which suggests a more calibrated uncertainty score can be achieved from increasing the number of model in the auxiliary set. However, in all datasets, our multi-sample total uncertainty measure outperforms aleatoric uncertainty in AUROC, even when only one auxiliary model is used.

\begin{figure}[!ht]
    \centering
    \includegraphics[width=\textwidth]{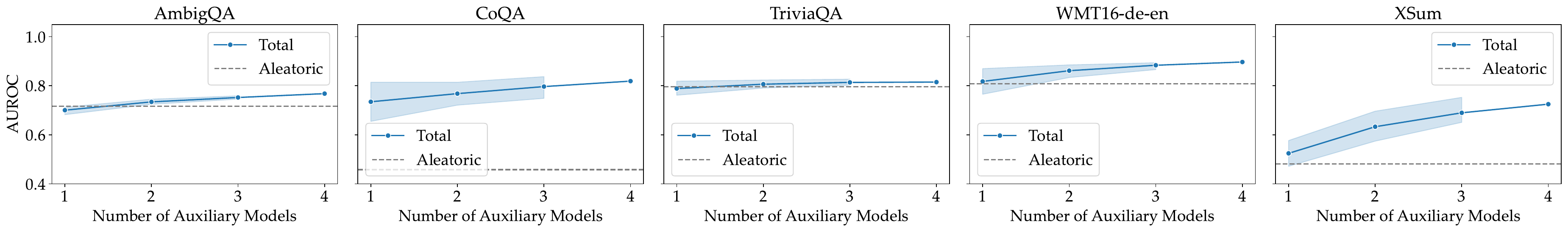}
    \caption{We plot AUROC as a function of the number of auxiliary models used to compute total uncertainty for \texttt{Mistral-7B-Instruct-v0.3}. Total uncertainty improves with more models, and variance decreases.}
    \label{fig:ablation-num-models}
\end{figure}

\paragraph{Number of Samples for Uncertainty Estimation.}
We next investigate how the number of response samples per model affects the performance of uncertainty estimates. For each model in the auxiliary set, we vary the number of generations used in total uncertainty computation from 5 to 50, and compare against two baselines for aleatoric uncertainty: one computed using 5 samples and another using 10 samples, matching the regimes used in our main experiments.

As shown in Figure~\ref{fig:ablation-num-samples}, AUROC for total uncertainty usually slightly increases with more samples, with diminishing returns beyond 30 samples in most tasks. Notably, TU consistently outperforms AU baselines across all datasets. These findings also reinforce the practicality of TU even under constrained budgets, as improvements are apparent with as few as 10 samples ($n=5$ on the x-axis).

\begin{figure}[!ht]
\centering
\includegraphics[width=\textwidth]{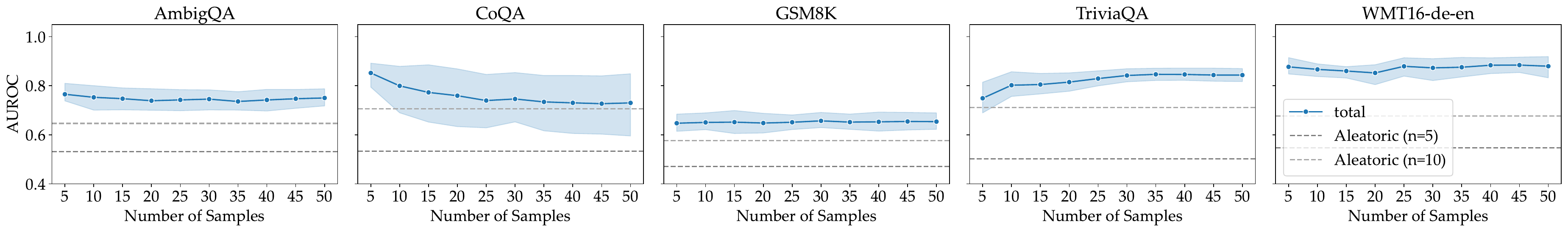}
\caption{AUROC of total uncertainty as a function of the number of samples per model. Even with a small number of samples, TU outperforms aleatoric baselines (5-sample and 10-sample variants). Gains saturate around 30–40 samples.}
\label{fig:ablation-num-samples}
\end{figure}

\paragraph{Similarity Backbone.}
\label{app:similarity-ablation}
To test the robustness of our results to the choice of similarity function, we repeated the analysis on \texttt{TriviaQA} using \texttt{deberta-large-mnli} (entailment probability as similarity) in addition to the original \texttt{sentence-T5-xl} embeddings. As shown in Table~\ref{tab:similarity-ablation}, the performance trend (TU $>$ AU) remains consistent across both similarity measures, which confirms that our framework is not dependent on a specific similarity backbone.

\begin{table}[h]
\centering
\caption{AUROC on \texttt{TriviaQA} using two different similarity backbones: cosine similarity over \texttt{sentence-T5-xl} embeddings vs.\ entailment probability from \texttt{deberta-large-mnli}. TU consistently outperforms AU regardless of the similarity measure.}
\resizebox{0.95\columnwidth}{!}{%
\begin{tabular}{l cc cc}
\toprule
 & \multicolumn{2}{c}{\textbf{Embedding (sentence-T5-xl)}} & \multicolumn{2}{c}{\textbf{Entailment (deberta-large-mnli)}} \\
\cmidrule(lr){2-3} \cmidrule(lr){4-5}
\textbf{Model} & AU & TU & AU & TU \\
\midrule
\texttt{Mistral-7B-Instruct-v0.3} & 0.796 & 0.815 & 0.788 & 0.885 \\
\texttt{Qwen2.5-7B-Instruct}      & 0.811 & 0.900 & 0.801 & 0.905 \\
\texttt{granite-3.0-8b-instruct}   & 0.733 & 0.784 & 0.740 & 0.821 \\
\texttt{Llama-3.1-8B-Instruct}     & 0.744 & 0.776 & 0.752 & 0.810 \\
\texttt{gemma-2-9b-it}             & 0.787 & 0.812 & 0.765 & 0.862 \\
\bottomrule
\end{tabular}
}
\label{tab:similarity-ablation}
\end{table}

\paragraph{Weighted Total Uncertainty.}
\label{app:weighted-tu}
Our formulation defines $TU = AU + EU$ (Equation~\ref{eq:total}). In practice, one could also consider a weighted interpolation $TU_\lambda = \lambda \cdot U_{\text{aleatoric}} + (1 - \lambda) \cdot U_{\text{epistemic}}$, where $\lambda$ controls the relative contribution of each component. Note that we do not assume access to correctness labels at uncertainty quantification time, so $\lambda$ is not tuned as a hyperparameter. Table~\ref{tab:weighted-tu} reports AUROC across all benchmarks (averaged over five reference models) for different values of $\lambda$. While specific values of $\lambda$ can yield marginal gains on individual datasets, the unweighted sum ($TU$) consistently outperforms AU across all benchmarks without requiring task-specific calibration.

\begin{table}[h]
\centering
\caption{AUROC for weighted total uncertainty $TU_\lambda = \lambda \cdot AU + (1 - \lambda) \cdot EU$ across benchmarks, averaged over five reference models. Bold indicates the best score per dataset. The unweighted TU (= AU + EU) is competitive across all benchmarks without requiring tuning.}
\begin{adjustbox}{width=\textwidth}
\begin{tabular}{l cccccccccc}
\toprule
\textbf{Metric} & \texttt{AmbigQA} & \texttt{CoQA} & \texttt{GSM8K} & \texttt{HotpotQA} & \texttt{NQ-open} & \texttt{QASPER} & \texttt{TriviaQA} & \texttt{TruthfulQA} & \texttt{WMT16} & \texttt{XSum} \\
\midrule
AU        & 0.716 & 0.659 & 0.617 & 0.582 & 0.676 & 0.610 & 0.774 & 0.533 & 0.315 & 0.603 \\
EU        & 0.520 & 0.774 & 0.529 & 0.654 & 0.533 & 0.531 & 0.600 & 0.541 & 0.384 & 0.527 \\
\midrule
TU        & 0.753 & 0.800 & \textbf{0.651} & 0.733 & 0.727 & \textbf{0.642} & 0.817 & 0.557 & 0.384 & \textbf{0.707} \\
$\lambda{=}0.1$ & 0.556 & 0.792 & 0.549 & 0.676 & 0.565 & 0.548 & 0.648 & 0.546 & 0.384 & 0.543 \\
$\lambda{=}0.2$ & 0.599 & 0.803 & 0.575 & 0.698 & 0.603 & 0.576 & 0.699 & 0.555 & \textbf{0.386} & 0.568 \\
$\lambda{=}0.3$ & 0.661 & \textbf{0.825} & 0.597 & 0.725 & 0.646 & 0.602 & 0.752 & 0.562 & \textbf{0.386} & 0.612 \\
$\lambda{=}0.4$ & 0.714 & 0.818 & 0.628 & \textbf{0.744} & 0.689 & 0.631 & 0.795 & \textbf{0.563} & 0.383 & 0.679 \\
$\lambda{=}0.5$ & 0.753 & 0.800 & 0.651 & 0.733 & 0.727 & 0.642 & 0.817 & 0.557 & 0.384 & 0.707 \\
$\lambda{=}0.6$ & \textbf{0.764} & 0.777 & 0.648 & 0.701 & \textbf{0.739} & 0.639 & \textbf{0.825} & 0.549 & 0.379 & 0.678 \\
$\lambda{=}0.7$ & 0.754 & 0.751 & 0.639 & 0.662 & 0.731 & 0.635 & 0.823 & 0.547 & 0.365 & 0.649 \\
$\lambda{=}0.8$ & 0.742 & 0.726 & 0.628 & 0.634 & 0.713 & 0.627 & 0.812 & 0.545 & 0.352 & 0.627 \\
$\lambda{=}0.9$ & 0.732 & 0.708 & 0.624 & 0.611 & 0.695 & 0.621 & 0.799 & 0.537 & 0.333 & 0.613 \\
\bottomrule
\end{tabular}
\end{adjustbox}
\label{tab:weighted-tu}
\end{table}

\subsection{Additional Results on Multiple-choice QA tasks}
\label{app:bbh}

To evaluate whether our findings extend beyond open-ended generation, we adapt a subset of tasks from the Big-Bench Hard (BBH)~\citep{suzgun2022challenging} benchmark into a long-form QA format with chain-of-thought answering. Specifically, we consider \texttt{Boolean Expressions}, \texttt{Disambiguation QA}, and \texttt{Word Sorting}, and prompt models to justify their answers rather than selecting from multiple choices directly. We then evaluate uncertainty scores over the full responses using the same semantic similarity pipeline as in our main experiments.

Table~\ref{tab:bbh} reports AUROC scores for both AU and TU across models on these tasks. We observe that TU improves over AU in most cases, with the largest gains appearing when base model performance is low (e.g., Qwen2.5-7B on \texttt{Disambiguation QA} and \texttt{Boolean Expressions}). These results demonstrate that TU remains effective in identifying incorrect generations even when the task is originally framed as multiple-choice, provided responses are elicited in free-form.
\begin{table}[h]
\centering
\caption{Uncertainty AUROC scores across models and benchmarks when different 7B/8B/9B parameter models are used as auxiliary models. Total Uncertainty is better calibrated with correctness than Aleatoric Uncertainty. }
\begin{adjustbox}{width=\textwidth}
\begin{tabular}{c | c | c | lll}
\toprule
\textbf{Benchmark}          & \texttt{Model}                    & \textbf{Accuracy} & \multicolumn{1}{c}{\textbf{Aleatoric AUROC}} & \multicolumn{1}{c}{\textbf{Total AUROC}} \\
\midrule
\multirow{5}{*}{\textbf{BBH Fewshot Boolean Expressions}} & \texttt{Llama-3.1-8B-Instruct}    & \textbf{0.88}     & \textbf{0.662} & 0.658      \\
 & \texttt{Mistral-7B-Instruct-v0.3} & \textbf{0.84}     & \textbf{0.746} & 0.735      \\
 & \texttt{Qwen2.5-7B-Instruct}      & \textbf{0.53}     & 0.744          & \textbf{0.909}                           \\
 & \texttt{gemma-2-9b-it}            & \textbf{0.86}     & 0.593          & \textbf{0.725}                           \\
 & \texttt{granite-3.0-8b-instruct}  & \textbf{0.9}      & \textbf{0.659} & 0.658      \\ \midrule
\multirow{5}{*}{\textbf{BBH Fewshot Disambiguation QA}}   & \texttt{Llama-3.1-8B-Instruct}    & \textbf{0.59}     & 0.544          & \textbf{0.594}                           \\
 & \texttt{Mistral-7B-Instruct-v0.3} & \textbf{0.64}     & 0.525          & \textbf{0.656}                           \\
 & \texttt{Qwen2.5-7B-Instruct}      & \textbf{0.44}     & 0.561          & \textbf{0.81} \\
 & \texttt{gemma-2-9b-it}            & \textbf{0.69}     & 0.61           & \textbf{0.65} \\
 & \texttt{granite-3.0-8b-instruct}  & \textbf{0.62}     & 0.486          & \textbf{0.562}                           \\ \midrule
\multirow{5}{*}{\textbf{BBH Fewshot Word Sorting}}        & \texttt{Llama-3.1-8B-Instruct}    & \textbf{0.69}     & 0.476          & \textbf{0.512}                           \\
 & \texttt{Mistral-7B-Instruct-v0.3} & \textbf{0.77}     & \textbf{0.529} & 0.429      \\
 & \texttt{Qwen2.5-7B-Instruct}      & \textbf{0.44}     & 0.587          & \textbf{0.645}                           \\
 & \texttt{gemma-2-9b-it}            & \textbf{0.96}     & 0.475          & \textbf{0.576}                           \\
 & \texttt{granite-3.0-8b-instruct}  & \textbf{0.58}     & \textbf{0.578} & 0.485  \\
\bottomrule
\end{tabular}
\label{tab:bbh}
\end{adjustbox}
\end{table}

\newpage
\subsection{Epistemic Uncertainty Analysis}
\label{app:au-failures}
Figure~\ref{fig:epistemic-by-au-bucket-per-model} disaggregates the trend shown in Figure~\ref{fig:epistemic-by-au-bucket} by model. For all five reference models, incorrect generations in the low-AU regime show consistently higher EU than correct ones, which reaffirms that EU captures confident failures missed by self-consistency. This separation weakens in mid- and high-AU buckets, where both correct and incorrect outputs tend to be more uncertain. The consistency of this pattern across different models highlights the effectiveness of EU in identifying unreliable predictions when AU alone is low. Similarly, Figure~\ref{fig:epistemic-by-au-bucket-per-dataset} shows a similar trend when results are disaggregated by \emph{dataset}.

\begin{figure}[!ht]
    \centering
    \includegraphics[width=1\textwidth,trim={0 0 0 0},clip]{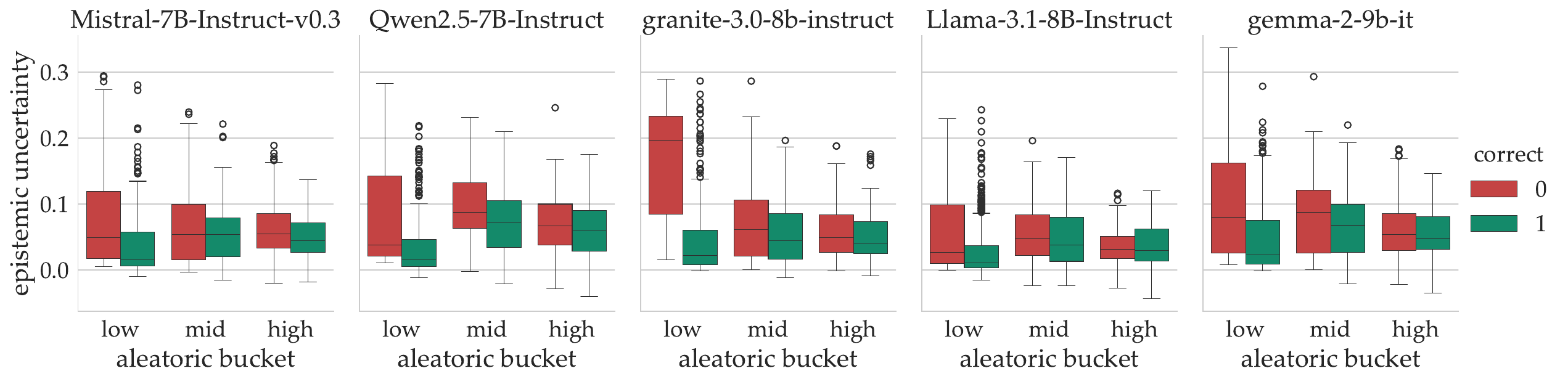}
    \caption{Distribution of EU across different levels of AU and correctness. Across all models, we find that incorrect responses in the low-aleatoric regime are assigned higher EU than correct ones on average. }
    \label{fig:epistemic-by-au-bucket-per-model}
\end{figure}

\begin{figure}[!ht]
    \centering
    \includegraphics[width=1\textwidth,trim={0 0 0 0},clip]{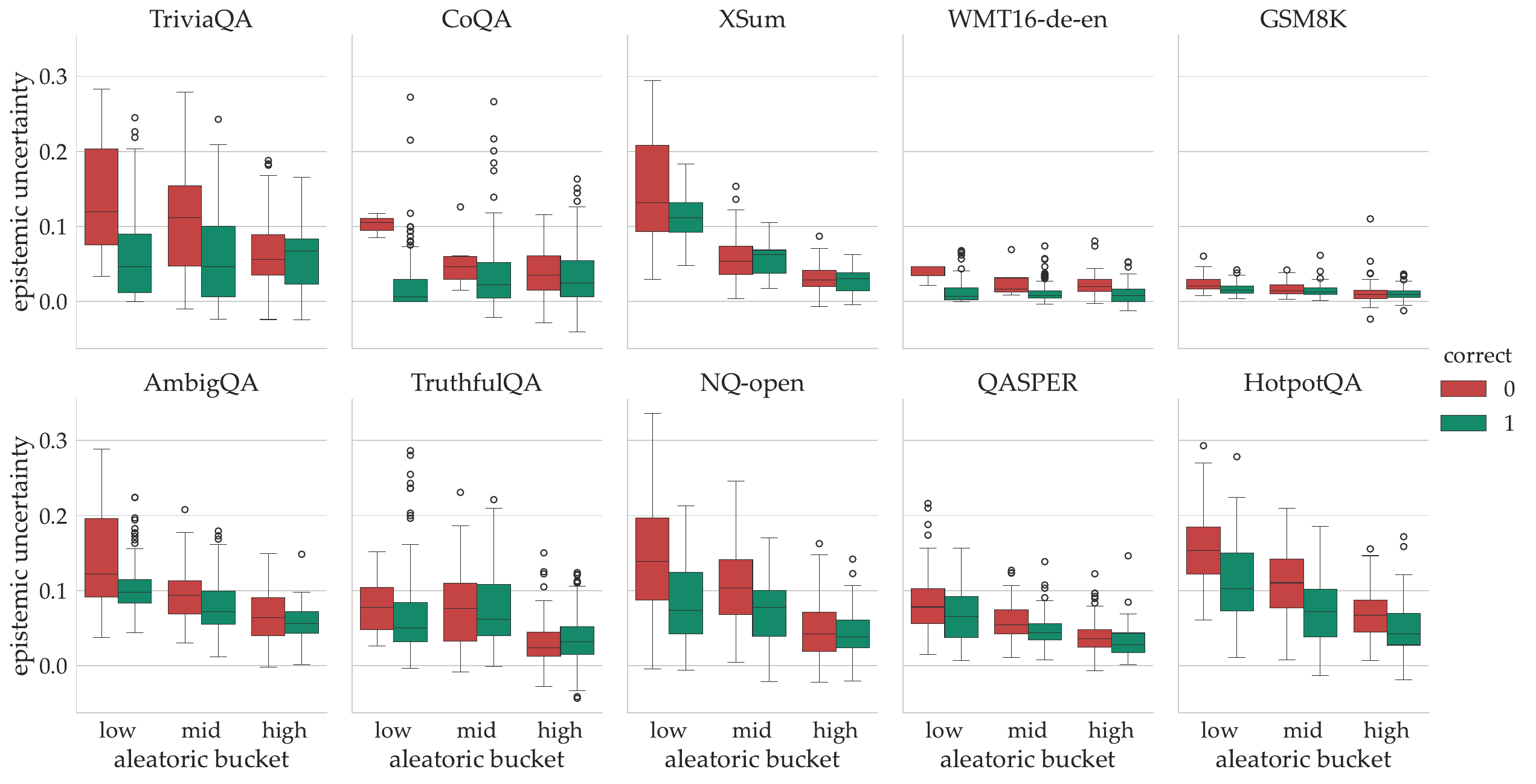}
    \caption{Distribution of EU across different levels of AU and correctness. Across all benchmarks, we find that incorrect responses in the low-aleatoric regime are assigned higher EU than correct ones on average. }
    \label{fig:epistemic-by-au-bucket-per-dataset}
\end{figure}

\subsection{Computational cost}
\label{app:compute}

We report full wall-clock measurements to contextualize the overhead introduced by our TU ensemble relative to a self-consistency baseline with the same total sample budget. All runs use \textbf{TriviaQA} on a single Nvidia L40S (48\,GB) unless noted. Decoding settings and similarity-oracle calls are identical across settings; the observed gap comes from loading additional checkpoints.

\begin{table}[h]
\centering
\begin{adjustbox}{width=\textwidth}
\begin{tabular}{lcccc}
\toprule
Regime & Setting & Models $\times$ samples & Per-model mean (s) & Total wall-clock (s) \\
\midrule
Single-GPU (sequential) & Single model, 10 samples & $1 \times 7\text{--}9\text{B} \times 10$ & $244.1 \pm 58.7$ & $244.1$ \\
Single-GPU (sequential) & TU ensemble, 5 models $\times$ 2 samples & $5 \times 7\text{--}9\text{B} \times 2$ & $78.3 \pm 12.9$ & $391.7$ \\
Multi-GPU (parallel, $5\times$) & TU ensemble, 5 models $\times$ 2 samples & $5 \times 7\text{--}9\text{B} \times 2$ & $78.3 \pm 12.9$ & $\approx 78$ \\
\bottomrule
\end{tabular}
\end{adjustbox}
\vspace{3pt}
\caption{\small \textbf{Wall-clock on TriviaQA.} Sequential TU is slower due to streaming four extra checkpoints; peak GPU memory is unchanged. With 5-way parallelism (one GPU per model), wall-clock returns to $\sim$78\,s while preserving TU's calibration and abstention gains.}
\label{tab:wallclock}
\end{table}

\paragraph{Takeaways.}
1) \emph{Token-generation and similarity costs are matched} between the self-consistency baseline ($1\times 10$ samples) and TU ($5\times 2$). 2) \emph{Sequential overhead is dominated by checkpoint loads}; peak VRAM does not increase. 3) \emph{Parallel execution amortizes loading}: with $5\times$ parallelism, TU matches single-model wall-clock while retaining its AUROC and selective-abstention improvements. In practice, a small ensemble (2--5 auxiliaries, 1--2 samples each) provides most of the TU benefit under common deployment envelopes.

\paragraph{Computational overhead of loading models.}
We acknowledge the higher memory requirement of loading multiple models but emphasize that our comparison controls for total inference compute budget. Specifically, our main results compare AU (10 samples) against TU ($2$ samples $\times$ $5$ models), which keeps the total tokens generated the same.

This overhead is a tradeoff for better UQ, because increasing the sampling budget for the single-model baseline does not close the performance gap. As shown in Table~\ref{tab:au-tu-budget}, increasing the sampling budget for AU from 5 to 20 does not improve AUROC for AU. In contrast, TU with just 5 total samples ($1$ sample $\times$ $5$ models) consistently outperforms AU with 20 samples.

\begin{table}[h]
\centering
\small
\caption{AUROC on \texttt{TriviaQA} under varying sampling budgets. Increasing AU samples does not close the gap with TU.}
\resizebox{\columnwidth}{!}{%
\begin{tabular}{lcccccc}
\toprule
\textbf{Model} & \textbf{AU $n{=}5$} & \textbf{AU $n{=}10$} & \textbf{AU $n{=}15$} & \textbf{AU $n{=}20$} & \textbf{TU $n{=}1{\times}5$} & \textbf{TU $n{=}2{\times}5$} \\
\midrule
\texttt{Llama-3.1-8B-Instruct}     & 0.78 & 0.78 & 0.82 & 0.81 & 0.89 & 0.87 \\
\texttt{Mistral-7B-Instruct-v0.3}  & 0.71 & 0.74 & 0.76 & 0.76 & 0.87 & 0.85 \\
\texttt{Qwen2.5-7B-Instruct}       & 0.78 & 0.83 & 0.84 & 0.84 & 0.93 & 0.92 \\
\texttt{gemma-2-9b-it}              & 0.70 & 0.70 & 0.71 & 0.71 & 0.84 & 0.83 \\
\texttt{granite-3.0-8b-instruct}    & 0.73 & 0.72 & 0.74 & 0.74 & 0.90 & 0.89 \\
\bottomrule
\end{tabular}}
\label{tab:au-tu-budget}
\end{table}

This confirms that the confident failures we found are systematic, and these models often have collapsed posteriors on incorrect answers, so oversampling cannot recover the uncertainty. The ensemble overhead provides qualitatively new information (cross-model disagreement) that is inaccessible to single models.

\subsection{Computing Correctness Using an LLM Judge}
\label{app:llm-judge}
We compute correctness scores using \texttt{Meta-Llama-3-70B-Instruct} deployed via a local \texttt{vLLM} server. Each model prediction is evaluated independently against the gold answers using a structured prompt that includes five few-shot examples, held fixed across all evaluations. The prompt instructs the judge to assign a correctness score from the discrete set \texttt{\{0.0, 0.1, ..., 1.0\}} based on the alignment between the predicted and gold answers, while explicitly ignoring the model's own knowledge.

The judge receives as input: (1) the user-defined task or question, (2) a list of gold answers, and (3) the model-generated answer. It is instructed to output a JSON object containing a numerical score and a justification. The request is submitted using deterministic decoding (\texttt{temperature}= 0, \texttt{max\_tokens} = 20), and we employ up to three retries with truncated context in case of failures due to prompt length.

Correctness is evaluated using the first response generated by each model. The gold answers are passed verbatim, and no normalization is applied to either predictions or references. By default, a prediction is considered correct if its score exceeds 0.5. For tasks where differences should be penalized (e.g., summarization or translation), we increase the threshold to 0.9 (specifically, for \texttt{XSum} and \texttt{WMT16-de-en}). These thresholds are applied during AUROC and selective prediction evaluations.

\paragraph{Judge Evaluation. }
We assess the reliability of the LLM judge used to score correctness against gold references. First, we perform a cross-judge check (Llama-3-70B-Instruct vs. GPT-4o) and find high agreement on both probabilities and binary labels. In 93/100 cases the judges produced identical correctness probabilities. In 5/100, probabilities differed but mapped to the same binary label under the 0.5 threshold, so AUROC is unchanged.
Only 2/100 items were hard conflicts, and manual inspection favored GPT-4o in both.

Second, We manually audited 100 Mistral-generated samples across TriviaQA and TruthfulQA and observed <6\% human–judge disagreement. We also compare to rule-based matchers from \texttt{lm-evaluation-harness} and find them brittle for free-form outputs, which motivagtes an LLM judge. 

\paragraph{Prompt Format.}
We provide the prompt format used for the LLM Judge here.

\newtcolorbox{promptbox}{
  colback=gray!10,
  colframe=gray!30,
  boxrule=0.3pt,
  arc=2pt,
  left=4pt,
  right=4pt,
  top=4pt,
  bottom=4pt,
  fontupper=\ttfamily\small\obeylines,
  breakable
}

\begin{promptbox}
I want you to act as a judge for how well a model did answering a user-defined task.

You will be provided with a user-defined task that was given to the model, its golden answer(s), and the model's answer. The context of the task may not be given here. Your task is to judge how correct the model's answer is based on the golden answer(s), without seeing the context of the task, and then give a correctness score. The correctness score should be one of the below numbers: 0.0 (totally wrong), 0.1, 0.2, ..., 1.0 (totally right). You should also add a brief justification regarding how the model's answer conforms to or contradicts the golden answer(s). Your response must follow the format:
\{
  "correctness\_score": your\_score,
  "justification": your\_justification
\}

Note that each one of the golden answers is considered correct. Thus if the model's answer matches any one of the golden answers, it should be considered correct.

---

Example 1:

User-defined task --- Sandy bought 1 million Safe Moon tokens. She has 4 siblings. She wants to keep half of them to herself and divide the remaining tokens among her siblings. After splitting it up, how many more tokens will she have than any of her siblings?

Golden Answer(s) --- <answer 1> 375000

Model's Answer --- Sandy will have more tokens than any sibling by 3/8 million.

Model Output:
\{
  "correctness\_score": 1.0,
  "justification": "The model's answer of 3/8 million equals 375,000, which matches the gold answer exactly."
\}

---
... (3 more examples)

---

Target Example:

User-defined task --- [QUESTION]

Golden Answer(s) --- <answer 1> [...]; <answer 2> [...]

Model's Answer --- [MODEL RESPONSE]

Model Output:
\{
  "correctness\_score": ?,
  "justification": ?
\}
\end{promptbox}

\end{document}